\newtheorem{theorem}{Theorem}
\newtheorem{definition}{Definition}
\newtheorem{example}{Example}
\newcommand{\BibTeX}{B\kern-.05em{\sc i\kern-.025em b}\kern-.08em\TeX}
\newcommand{\Omc}{\ensuremath{\mathcal{O}}\xspace}
\newcommand{\ALCQ}{\ensuremath{\mathcal{ALCQ}}\xspace}
\newcommand{\ALCHOIQ}{\ensuremath{\mathcal{ALCHOIQ}}\xspace}
\newcommand{\HornALCHOIQ}{Horn-\ALCHOIQ}
\newcommand{\SROIQ}{\ensuremath{\mathcal{SROIQ}}\xspace}
\newcommand{\NC}{\ensuremath{\mathsf{N_C}}\xspace}
\newcommand{\NR}{\ensuremath{\mathsf{N_R}}\xspace}
\newcommand{\NI}{\ensuremath{\mathsf{N_I}}\xspace}
\newcommand{\JustSet}{\ensuremath{\mathcal{J}}\xspace}
\newcommand{\Just}{\ensuremath{J}\xspace}
\newcommand{\AllJust}{\operatorname{AllJust}}
\newcommand{\Branches}{\ensuremath{\operatorname{Branches}}}
\newcommand{\Fluents}{\ensuremath{\mathcal{F}}\xspace}
\newcommand{\Queries}{\ensuremath{\mathcal{Q}}\xspace}
\newcommand{\SomeIndividuals}{\ensuremath{I}\xspace}
\newcommand{\Jscheme}{\ensuremath{\mathfrak{J}}}
\newcommand{\Xbf}{\mathbf{X}\xspace}
\newcommand{\val}{{\fontsize{8}{8}\selectfont\textsf{val}}\xspace}
\newcommand{\Val}{{\fontsize{8}{8}\selectfont\textsf{Val}}\xspace}
\newcommand{\AxPattern}{\ensuremath{\mathfrak{A}}\xspace}
\newcommand{\PDDLSpec}{\textbf{S}\xspace}
\newcommand{\PDDLdomain}{\textbf{D}\xspace}
\newcommand{\PDDLproblem}{\textbf{P}\xspace}
\newcommand{\Pred}{\ensuremath{\mathsf{N_P}}\xspace}
\newcommand{\PredDer}{\ensuremath{\mathsf{N_{P,D}}}\xspace}
\newcommand{\Obj}{\ensuremath{\mathfrak{O}}\xspace}
\newcommand{\Der}{\mathfrak{D}\xspace}
\newcommand{\act}{\ensuremath{\mathfrak{a}}\xspace}
\newcommand{\state}{\ensuremath{\mathfrak{s}}\xspace}
\newcommand{\Act}{\mathfrak{A}}
\newcommand{\pre}{\frak{pre}\xspace}
\newcommand{\eff}{\ensuremath{\frak{eff}}\xspace}
\newcommand{\del}{\frak{del}\xspace}
\newcommand{\add}{\frak{add}\xspace}
\newcommand{\goal}{\frak{G}\xspace}
\newcommand{\tup}[1]{\langle #1\rangle}
\newcommand{\hide}[1]{}
\newcommand{\pddl}[1]{\textup{\texttt{#1}}}
\newcommand{\owl}[1]{{\fontsize{8}{8}\selectfont \textup{\textsf{#1}}}}
\newcommand{\noteInline}[1]{\todo[inline, color=lime]{#1}}
\newcommand{\patrick}[1]{\todo[inline,color=orange]{Patrick: #1}}
\newcommand{\OP}{\textbf{OP}}
\newcommand{\QueryInter}{\textbf{Q}}
\newcommand{\FluentInter}{\textbf{F}}
\newcommand{\Ostatic}{\ensuremath{\mathcal{O}_s}\xspace}
\newcommand{\exampleInc}[1]{\begin{tabular}{c}
		$A(#1), B(#1), A_C(#1),$\\
		$ A \sqcap B \sqsubseteq C, C \sqcap A_C \sqsubseteq \bot$
	\end{tabular}}
\newcommand{\cand}{\textsf{cnd}}
\newcommand{\detect}[1]{\operatorname{ent}(#1)}	
\newcommand{\rew}{\operatorname{rew}}
\newcommand{\dr}{\operatorname{dr}}
\newcommand{\SingleJust}{\textsc{SingleJust}\xspace}
\newcommand{\Amc}{\mathcal{A}}
\newcommand{\tr}[1]{#1}
\newcommand{\submission}[1]{}
\begin{document}


\begin{frontmatter}


\paperid{1206}


\tr{\title{Planning with OWL-DL Ontologies (Extended Version)}}
\submission{\title{Planning with OWL-DL Ontologies}}


\author[A]{\fnms{Tobias}~\snm{John}\orcid{0000-0001-5855-6632}\thanks{Email: tobiajoh@ifi.uio.no}}
\author[B]{\fnms{Patrick}~\snm{Koopmann}\orcid{0000-0001-5999-2583}\thanks{Email: p.k.koopmann@vu.nl}}

\address[A]{University of Oslo, Gaustadall\'{e}en 23B, 0316 Oslo, Norway}
\address[B]{Vrije Universiteit Amsterdam, De Boelelaan 1105, 1081 HV Amsterdam,
	The Netherlands}


\begin{abstract}
	We introduce ontology-mediated planning, in which planning problems are combined
	with an ontology. Our formalism differs from existing ones in that we focus on a strong
	separation of the formalisms for describing planning problems and ontologies, which are
	only losely coupled by an interface. Moreover, we present a black-box algorithm that supports
	the full expressive power of OWL DL. This goes beyond what existing approaches combining
	automated planning with ontologies can do, which only support limited description logics
	such as DL-Lite and description logics that are Horn. Our main algorithm relies on rewritings
	of the ontology-mediated planning specifications into PDDL, so that existing planning systems
	can be used to solve them. The algorithm relies on justifications,
	which allows for a generic approach that is independent of the expressivity
	of the ontology language. However, dedicated optimizations for computing 
	justifications need to be implemented to enable an efficient rewriting procedure.
	We evaluated our implementation on benchmark sets from several domains.
	The evaluation shows that our procedure works in practice and that tailoring the 
	reasoning procedure has significant impact on the performance.
\end{abstract}

\end{frontmatter}


\section{Introduction}\label{sec:introduction}

Automated planning continues to play a central role in many application domains
such as robotics. Systems for automated planning compute plans for a
\emph{planning domain} which defines a set of available actions that
describe how states of the system can be modified. A \emph{planning problem}
then consists of an initial state, a domain of objects and a goal that needs
to be reached, for which the planner computes a \emph{plan} in the form
of a sequence of actions that would lead from the initial state to a
state satisfying the goal~\cite{GhallbHKMRV1998,Lifschitz1987}.
%

Planning problems classically operate under a \emph{closed world semantics},
meaning that the initial state and all states that the system can go
through are represented as finite first-order interpretations, which requires
a suitable abstraction of the real world. In more complex scenarios, this
assumption is not realistic, as agents have only limited knowledge about the
world, and need to reason over complex domains. Ontologies are a
widely used formalism for describing complex domain knowledge under
\emph{open world semantics}. Here, we assume our knowledge of the world to
be generally incomplete~\cite{DL_TEXTBOOK}.
The ontology defines terminology and background
knowledge relevant for the application scenario, which allows to infer new
information from an incomplete description of the world. 
This also gives further
flexibility when specifying planning problems: for example, the available
actions might depend on a configuration of the robot, whose abilities
can be derived through the technical knowledge defined in the ontology.

Our first contribution is to propose \emph{ontology-mediated planning specifications}
as a new framework for linking planning problems with ontologies, inspired
by ontology-mediated model checking~\cite{OM-PMC}. The
idea is to have an ontology describing static knowledge described in
description logics (DL), the central formalism for specifying ontologies,
together with a specification of a planning problem in PDDL, the classical
language for specifying planning problems. These two are
then linked together through an interface that specifies how to translate
between PDDL and DL. Each state in the planning space has then two perspectives that we also
distinguish syntactically: as ground atoms in PDDL and as DL axioms.
%
By separating the two formalisms
for planning and ontologies,
existing formalizations can easily be integrated,
and planning and ontology experts do not have to learn a new formalism.
Moreover, we achieve a true \emph{separation of concerns}, allowing to
specify planning problems and domain knowledge relatively independently
and in dedicated languages.

Ontology-mediated planning specifications (OMPSs) are strongly related to
extended Knowledge Action Bases (eKABS)~\cite{HaririCMGMF2013,CalvaneseMPS2016}, which extend
classical DL knowledge bases with actions formulated over the
DL signature. One syntactical
difference between eKABs and ontology-mediated
planning specifications is that there is an integration rather than a
separation of concerns. Another difference is that eKABs allow to refer in
both pre- and post-conditions of actions to objects outside of the
domain. Existing techniques for planning with eKABs rely on the idea of
\emph{compilation schemes} that translate the actions in  an eKAB into
a PDDL planning domain, so that a PDDL planning system can be used out
of the shelf. The compilation scheme introduced in~\cite{CalvaneseMPS2016} supports
DLs of the DL-Lite family, while the compilation scheme
introduced in~\cite{BorgwardtHKKNS2022} supports Horn-DLs.
\cite{BorgwardtHKKNS2022} also studies the limitations of such compilation schemes
in general: for DLs such as Horn-\SROIQ, under common complexity theoretic assumptions,
no compilation scheme  that increases the size of the planning domain at most polynomially exists. Moreover, since the approach
relies on Datalog rewritings, it is not applicable to
DLs that are not Datalog-rewritable, such as DLs beyond the Horn fragment.
This means, in this approach,
\emph{disjunction}, unrestricted \emph{negation} and
\emph{value restrictions}, as well as \emph{cardinality restrictions} are not allowed. For instance, it
is impossible to express in a Horn DL that a robot can carry at most five
objects, since this would require a cardinality restriction that is not
available in Horn logics.

Our second contribution is that we use a rewriting technique that is not
restricted to Horn DLs, but can indeed be used for
any DL which is supported by a reasoner, and thus in practice supports
the full ontology language standard OWL DL via the reasoner HermiT~\cite{GlimmHMSW2014}.
Similar to existing approaches, our technique
generates specifications in PDDL to be processed by an off-the-shelf
planning system. However, we do not provide a compilation scheme, which
would only consider the \emph{domain} (i.e. set of actions) of the planning
specification, but we also also take the planning problem
itself into account. In other words, we solve part of the planning problem while
generating the rewritings, and our solution is not \emph{problem-independent}
anymore. Similar to the technique for ontology-mediated model
checking in~\cite{OM-PMC}, we make this possible through the use of
\emph{justifications}, which are minimal consistent sets of
axioms that can be computed with standard software using DL reasoners
as blackbox%
%
~\cite{HST_Algorithm_OWL}. However, additional efforts are needed
as our domains are larger than the ones considered
in~\cite{OM-PMC}. We thus developed new algorithms for computing
justifications that are optimized to the specific form of axiom
sets that are relevant when rewriting ontology-mediated planning
specifications into PDDL.
An evaluation on planning instances from different domains shows that these
optimizations are indeed crucial for making our technique practical,
and on some problems even lead to comparable performances
to those of existing planning techniques optimized for less
expressive DLs.

\tr{%
The framework has been previously introduced in workshop papers~\cite{OMPlanningPLATO,OMPlanningDL},
while the optimizations to make it efficient in practice have not been presented before.
Additional proofs and evaluation results
can be found in the appendix.
}
\submission{%
Additional proofs and evaluation results can be found in the extended
version of the paper~\cite{EXTENDED}.
}

%
%
%

\paragraph{Related Work.}
Our paper relates most to research into
eKABs~\cite{HaririCMGMF2013,CalvaneseMPS2016,BorgwardtHKKNS2022}, and
into ontology-mediated model checking, whose
\emph{ontologized programs} have a similar structure to our
ontology-mediated planning specifications~\cite{OM-PMC}. All
these approaches represent system states using sets of description
logic axioms interpreted under the open-world semantics, on which
actions can only operate by removing or adding statements. There is
also another line of research that considers actions that instead
directly operate on DL interpretations~\cite{DL-ACTIONS},
leading to issues such as decidability of plan existence~\cite{ZarriessClassen2015} and the
ramification problem~\cite{DL-ACTION-RAMIFICATION}.
While~\cite{Milicic2007} studies planning with
such actions, there is to the best of our knowledge so far no practical
system for planning with such actions.

There is also a relation to temporal DLs~\cite{Temp-DLs}
and knowledge bases~\cite{Temporal_KB_2013}, which allow to specify
changing states of a system directly through the DL.
This has for instance been used
in~\cite{Temporal_KB_Car_Application2024}. While there are even
temporal DLs that consider branching
time~\cite{DL-CTL}, the focus here is on describing possible sequences of
states, and not on computing plans to achieve a specific goal.

Describing the behavior of a system and using an ontology to reflect on the state of it to gain more insight is also part of work on \emph{semantically lifted states} \cite{KamburjanKSJG2021}. 
This line of work is not focused on planning but on programming languages for digital twins.
Another 
way
of combining planning and ontologies is using ontologies as a modeling language to describe planning domains \cite{AlsafiVMC007,BalakirskyKKPSG2013,KluschGS2005,LouadahPMTHB2022}.

\section{Preliminaries}

\paragraph{Planning Problems}
We consider the common syntax and semantics of PDDL planning problems 
introduced in
\cite{GhallbHKMRV1998,Lifschitz1987} and described in detail in
\cite{FoxL2003}, with the extension for derived predicates~\cite{HoffmannE2005}.
Let $\Pred$ be a set of \emph{predicate names}, and let $\Obj$ be a set of
\emph{constants}. A \emph{state} \state over $\Obj$ is a finite set of ground atoms over
$\Pred$ and $\Obj$, which we identify with the corresponding first-order interpretation.

We denote a special set $\PredDer\subseteq\Pred$ of predicates called \emph{derived predicates}.
A \emph{derivation rule} is of the form $r=p(\vec{x})\leftarrow\phi(\vec{x})$, where
$\vec{x}$ is a vector of variables, $p\in\PredDer$, and $\phi(\vec{x})$ is a first-order formula
with free variables $\vec{x}$ in which derived predicates occur only positively.
We call $p(\vec{x})$ the head and $\phi(\vec{x})$ the body of the rule. The result of applying such
a rule on a state $\state$ is obtained by adding to $\state$ all $p(\vec{c})$ for which
$\state\models\phi(\vec{c})$, where $\vec{c}$ is a vector of constants with the length of $\vec{x}$. Given
a set $\Der$ of derivation rules and a state $\state$, $\Der(\state)$ denotes the result of applying
the rules in $\Der$ until a fixpoint is reached.

An \emph{action} is a tuple $\act= \tup{\vec{x}, \pre, \eff}$ where $\vec{x}$ is a
vector of variables, $\pre$ is a first-order formula
with free variables from $\vec{x}$, and $\eff=\tup{\add,\del}$, where $\add$ and $\del$ are
finite sets of atoms using only variables from $\vec{x}$ and no derived predicates.
We call $\pre$ the \emph{precondition} of $\act$, and
\eff the \emph{effect}. If $\vec{x}=\tup{}$, $\act$ is called \emph{ground}.
An \emph{instance of $\act$ under $\Obj$} is a ground action $\act$ obtained by replacing all variables from $\vec{x}$
by constants from $\Obj$. let $\sigma: \vec{x} \mapsto \Obj$ be the function describing the replacement.
A ground action is \emph{applicable} on a state \state if
$\state\models\sigma(\pre)$ (where \state is treated as an interpretation), in
which case the result of \emph{applying \act on \state} is defined as
$\state(\act) := (\state \setminus \sigma(\del)) \cup \sigma(\add)$.

A \emph{planning domain} is a tuple $\PDDLdomain=\{\Act,\Der\}$ of a set $\Act$ of actions and a set $\Der$
of derivation rules. A \emph{planning problem} is a tuple $\PDDLproblem = \tup{\Obj, \state_0, \goal}$ with
$\Obj$ a set of constants, $\state_0$ a state over $\Obj$ called the \emph{initial state}, and $\goal$ a first-order
formula called \emph{goal}. A \emph{PDDL planning specification} is now a tuple $\PDDLSpec=\tup{\PDDLdomain,\PDDLproblem}$
of a planning domain \PDDLdomain and a planning problem \PDDLproblem. A \emph{plan} for such a planning specification is a
sequence of states $\state_0\ldots \state_n$ s.t. 1)~$\state_0$ is the initial state from $\PDDLproblem$,
2)~for every $i$, $0\leq i<n$, $\state_{i+1}=\state_i(\act)$, where $\act$ is an instance of an action
from $\Act$ over $\Obj$ that is applicable on $\Der(\state_i)$, and 3)~$\state_n\models \goal$, where $\state_n$ is
treated as an interpretation.



\paragraph{Description Logics and Ontologies} 
A DL ontology is a formalization of domain knowledge based on pair-wise disjoint, countably infinite sets \NC of \emph{concepts names} (unary predicates), \NR of \emph{role names} (binary predicates) and \NI of \emph{individual names} (constants), which are used to build complex expressions called \emph{concepts} that describe sets of individuals.
For the context of this paper, ontologies are sets of \emph{axioms}, which either describe terminological knowledge
(\emph{TBox} axioms) by putting concepts into relation with each other (e.g. by saying that a concept name describes the same 
set of objects than a complex concept) or describe assertional knowledge (\emph{ABox} axioms) by assigning concepts and role names to specific individuals. ABox axioms are of the forms $C(a)$, $r(a,b)$, $\neg r(a,b)$, $a=b$, $a\neq b$, where $C$ is a concept, $a$, $b$ are individual names, and $r$ is a role name. Each (TBox or ABox) axiom can be translated into a sentence in first-order logic, so that we can define entailment
between axioms and of axioms from ontologies based on this translation. Specifically, for an ontology \Omc and an axiom $\alpha$, we 
write $\Omc\models\alpha$ if $\alpha$ is logically entailed by the axioms in $\Omc$. For details on the different concept and axiom constructs, see~\cite{DL_TEXTBOOK}.

Our examples rely on the DL \ALCQ, which is a fragment of OWL~DL, and in which concepts $C$ follow the following syntax rule:
\[
	C ::= \top\mid A\mid \neg C\mid C\sqcap C\mid C\sqcup C\mid
			 \exists r.C\mid\forall r.C\mid{\geq}nr.C\mid{\leq}nr.C,
\]
where $A\in\NC$, $r\in\NR$. TBox axioms in \ALCQ are of the form $C\sqsubseteq D$ or $C\equiv D$, with $C$ and $D$ concepts.
	

\section{Framework}
\subsection{Ontology-Mediated Planning}

We capture our framework formally via \emph{ontology-mediated planning
specifications}, which are inspired by the ontologized programs introduced in~\cite{OM-PMC}.
At the heart of our framework is the notion of
ontology-enhanced states, which
combine a PDDL state with an ontology.

\begin{figure*}
	\newcommand{\OWLcolor}{MidnightBlue}
\newcommand{\Planningcolor}{BrickRed}
\newcommand{\Interfacecolor}{OliveGreen}
\tikzstyle{textNode} = [
	anchor=center,
	]

\tikzstyle{class} = [
	textNode,
	rounded corners,
	minimum height=16pt,
	minimum width=20pt,
	line width=0.75,
	draw
	]
\tikzstyle{individual} = [
	draw=black,
	fill opacity=0.1,
	text opacity=1,
	textNode,
	rounded corners,
	minimum height=16pt,
	minimum width=20pt,
	line width=0.75
	]

\tikzstyle{relation} = [
	line width=0.75pt,
	]

	\resizebox{\textwidth}{!}{

	\centering

	\begin{tikzpicture}[line width=0.75pt]
		\coordinate (origin) at (0,0);

		\draw (origin) node[individual, minimum width=2cm] (auv) {\owl{stackBot}};

		\draw ($(auv) + (2.5,0)$) node[individual] (wp1) {\owl{blockA}};
		\draw ($(wp1) + (1.4,0)$) node[individual] (wp2) {\owl{blockB}};
		\draw ($(wp2) + (1.4,0)$) node[individual] (blockC) {\owl{blockC}};

		{\strut
		Block};

		\draw ($(origin.south) + (3,-0.5)$) node[anchor=north] (TBox)
		{\setlength{\tabcolsep}{2pt} \begin{tabular}{r c l}
				$\owl{PR2}(\owl{stackBot})\qquad
				\owl{Block}(\owl{blockA})$ & &
				$\owl{Block}(\owl{blockB}) \qquad
				\owl{Block}(\owl{blockC})$\\
			    $\owl{blockA} \neq \owl{blockB}$& &
			    $\owl{blockA} \neq \owl{blockC}$ \qquad
			    $\owl{blockB}\neq\owl{blockC}$\\
				$\owl{PR2}\sqsubseteq\owl{Robot}\sqcap
				{\leq}2\owl{holds}.\owl{Block}$ & &
				$\owl{PR2} \sqcap {=}2\owl{holds}.\owl{Block}
				\sqsubseteq\owl{FullHands}$
		\end{tabular}};

		\draw ($(auv) + (2.25, 2.75)$) node[textNode] (query)
{$\models \owl{FullHands} (\owl{stackBot}) $};

		\draw ($(query) + (-8,0)$) node[textNode] (PDDLquery)
		{\pddl{fullHands(stackBot)}};

		\draw ($(auv) + (-5.75,1.75)$) node[textNode] (fluent)
		{\pddl{holds(stackBot, blockB)}};
		\draw ($(fluent) + (0,-0.5)$) node[textNode] (fluentA)
		{\pddl{holds(stackBot, blockA)}};

		\draw ($(fluentA) + (0,-1)$) node[textNode] (fluent2)
		{\pddl{on(blockB, blockA)}};
		\draw ($(fluent2) + (0,-0.5)$) node[textNode] (fluent3)
		{\pddl{onTable(blockC)}};

		\coordinate (onPath1) at ($(auv.north) + (2, 1.25)$);
		\coordinate (onPath2) at ($(onPath1) + (0.9, 0)$);

		\draw[->, rounded corners, relation, color=\Interfacecolor]
		($(auv.north) + (0.75, 0)$)
		--
		(onPath1) -- (onPath2) -- (wp2);

		\coordinate (onPath3) at ($(auv.north) + (0, 0.75)$);
		\coordinate (onPath4) at ($(onPath3) + (1.75, 0)$);

		\draw[draw=white,double distance=\pgflinewidth,ultra thick]
		(onPath3) -- (onPath4);
		\draw[->, rounded corners, relation, color=\Interfacecolor]
		($(auv.north) + (-0.75, 0)$) --
		(onPath3) -- (onPath4) -- (wp1);

		\coordinate (help6) at ($(auv) + (2.45, 0)$) ;
		\draw[color=black] (help6 |- fluent) node (relationLabel) {\owl{holds}};
		\coordinate (help14) at ($(auv) + (0.75, 0)$) ;
		\draw[color=black] (help14 |- fluentA) node (relationLabelA)
		{\owl{holds}};


%
%

		\draw[-latex, draw=\Interfacecolor] (fluent) --node[above]
		(Fvertical) {\phantom{$F$}} ($(relationLabel.west) + (-0.25, 0)$);
		\draw[-latex, draw=\Interfacecolor] (fluentA) --
		($(relationLabelA.west) + (-0.25, 0)$);
		\draw[-latex, draw=\Interfacecolor] (query.west) -- node[above]
		(Svertical) {\phantom{$S$}} (PDDLquery);


		\coordinate (help0) at (TBox.south east |- wp2.north east);
		\coordinate (help1) at ($(help0) + (0.2, 0.25)$);
		\draw[decoration={brace, raise=-5pt, amplitude=5pt}, decorate, draw=\OWLcolor]
		(help1) -- node[right=-5pt] {\footnotesize\begin{tabular}{c}
				static\\
				part of\\
				ontology \\
				(\Ostatic)
		\end{tabular}} (help1 |- TBox.south east);

		\coordinate (help2) at (help1 |- query.north);
		\coordinate (help3) at (help1 |- query.south);
		\draw[decoration={brace, raise=-5pt, amplitude=5pt}, decorate, draw=\OWLcolor]
		(help2) -- node[right=-5pt] {\footnotesize\begin{tabular}{c}
				ontology\\
				queries \\
				(\Queries)
		\end{tabular}} (help3);

		\coordinate (help4) at ($(help3) + (0, -0.3)$);
		\coordinate (help5) at ($(help1) + (0, 0.1)$);
		\draw[decoration={brace, raise=-5pt, amplitude=5pt}, decorate, draw=\OWLcolor]
		(help4) -- node[right=-5pt] {\footnotesize \begin{tabular}{c}
				dynamic\\
				part of\\
				ontology\\
				(\Fluents)
		\end{tabular}} (help5);

		\coordinate (help9) at (fluent.west |- PDDLquery.north);
		\coordinate (help10) at (fluent.west |- PDDLquery.south);
		\draw[decoration={brace, mirror, raise=5pt, amplitude=5pt}, decorate,
		draw=\Planningcolor]
		(help9) -- node[left=5pt] {\footnotesize \begin{tabular}{c}
				query-\\
				atoms
		\end{tabular}} (help10);

		\draw[decoration={brace, mirror, raise=5pt, amplitude=5pt}, decorate,
		draw=\Planningcolor]
		(fluent.north west) -- node[left=5pt] {\footnotesize \begin{tabular}{c}
				mapped\\
				atoms
		\end{tabular}} (fluentA.south west);

		\draw[decoration={brace, mirror, raise=5pt, amplitude=5pt}, decorate,
		draw=\Planningcolor]
		(fluent.west |- fluent2.north) -- node[left=5pt] {\footnotesize
		\begin{tabular}{c}
				atoms\\
				outside\\
				mapping
		\end{tabular}} (fluent.west |- fluent3.south);

		\coordinate (help11) at ($(query.north west) + (-1.5, 0)$);
		\draw[decoration={brace, raise=10pt, amplitude=5pt}, decorate, draw=\Planningcolor]
		(fluent.west |- query.north) -- node[above=15pt]
		{\textbf{planning perspective}} (fluent.east |- query.north);
		\draw[decoration={brace, raise=10pt, amplitude=5pt}, decorate, draw=\OWLcolor]
		(help11) -- node[above=15pt] {\textbf{DL perspective}} ($(help2) + (-0.5, 0)$);

		\coordinate (help7) at ($(fluent.east |- query.north) + (0.2, 0)$);
		\coordinate (help8) at ($(help11) + (-0.2, 0)$);
		\draw[decoration={brace, raise=10pt, amplitude=5pt}, decorate, draw=\Interfacecolor]
		(help7) -- node[above=17pt] (interface) {\textbf{interface}} (help8);

		\draw (interface |- Svertical) node {\QueryInter};
		\draw (interface |- Fvertical) node {\FluentInter};

		\coordinate (help12) at (help11 |- help1);
		\draw[dotted, draw=\OWLcolor, line width=1.25pt] ($(help1) + (-0.5, 0.05)$) -- ($(help12) + (-0.0, 0.05)$);

		\coordinate (help13) at (help11 |- help4);

		\draw[draw=\OWLcolor,double] ($(help4) + (-0.5, 0.15)$) -- ($(help13) + (-0.0, 0.15)$);

	\end{tikzpicture}
}
		\vspace{-15pt}
	\caption{Example of ontology based planning. The interface maps ontology
		queries to planning predicates and atoms in the planning perspective to
		ABox atoms.
		The static part of the ontology contains information about instances
		(ABox) as
		well as general axioms (TBox). The connections between
the two perspectives via
		the fluent (\FluentInter) and query (\QueryInter) interface are shown in green.}
	\label{fig:overview}
\end{figure*}
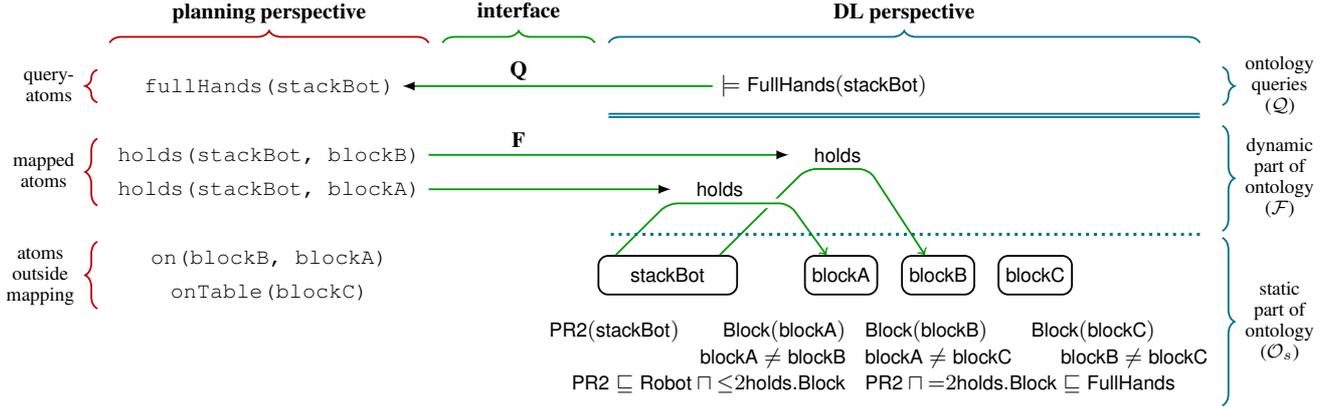

\begin{definition}[Ontology-Enhanced State]
 An \emph{ontology-enhanced state} is a tuple $q=\tup{\state_q,\Omc_q}$, where $\state_q$
is a set of atoms called the \emph{planner perspective of $q$}, and $\Omc_q$ is
a set of DL axioms called the \emph{DL perspective of $q$}.
\end{definition}
While the planner has only direct access to the
planner perspective, which is a PDDL state and allows
predicates of arbitrary arity, DL semantics and reasoning apply to the
DL perspective of the state, which is represented using DL syntax axioms.
The interface, which we define next, ensures that the two perspectives are compatible:
the axioms of the DL perspective
are then determined by the atoms in the planner perspective.
There is also a static component in the DL perspective,
which describes
state-independent information such as class definitions, general domain
knowledge, and the context of the planning problem. The planner
can access derived information from the DL perspective
using query predicates.
Before we give the formal
definition, we illustrate this idea with an example.

\begin{example}\label{example:overview}
An example of an ontology-enhanced state is depicted in
Figure~\ref{fig:overview}.
The scenario is inspired by the classical blocksworld planning domain. In
contrast to the classical problem in which the robot has only one hand, we use an
ontology to specify types of robots and their number of hands. In
the
example, the stacking robot is a \owl{PR2} robot~\cite{BohrenRGMP2011} that can hold
two blocks at a time, and if it holds two blocks,
it becomes an instance of
\owl{FullHands}. While relatively simple, those cardinality constraints
already go beyond the expressivity of \HornALCHOIQ, the most expressive DL
currently supported by existing
implementations for eKABs (see Section~\ref{sec:introduction}).
The planner perspective is shown on the left, the DL perspective on the right, and the interface in the
middle. If the atom \pddl{holds(stackBot, blockA)} becomes true in the
planner perspective, this
is reflected in the DL perspective by an axiom
expressing a corresponding relation between the two
individuals \owl{stackBot} and \owl{blockA}. Using the static
ontology, we can infer that \owl{stackBot} is an instance of the concept
\owl{FullHands}, because it stands in the \owl{holds} relation with
two different blocks. Consequently,
\owl{FullHands(stackBot)} is entailed. The query predicate
\pddl{fullHands} translates to a query for instances
of the concept \owl{FullHands}. Because in the DL perspective,
\owl{stackBot} is an instance of \owl{FullHands},
\pddl{fullHands(stackBot)} becomes true in the
planner perspective of the state.
\end{example}

To implement this mechanism, the interface consists of two components, the
fluent interface and the query interface.
\begin{definition}[Interface]\label{def:interfaces}
A \emph{fluent interface} is a partial inverse-function 
$\FluentInter:\Pred\cup\Obj\rightharpoonup\NC\cup\NR\cup\NI$ that assigns unary
predicates to concept names, binary predicates to role names
and constants to individual names.
We lift $\FluentInter$ to
atoms by setting
$\FluentInter(P(t_1,\ldots,t_n))=\FluentInter(P)\big(\FluentInter(t_1),\ldots,
\FluentInter(t_n)\big)$ if it
is defined.

A \emph{query interface} $\QueryInter$ is a set of \emph{query specifications}
which are expressions of the form
\[p(x_1: C_1,\ldots, x_n: C_n) \leftarrow \Phi(x_1,\ldots,x_n), \qquad\text{where}
\]
\begin{enumerate}[label=\textbf{Q\arabic*}]
 \item $p\in \PredDer$ is the \emph{query predicate},
 \item $x_1$, $\ldots$, $x_n$ are the \emph{query variables}
 \item each $C_i$ is a DL concept specifying the \emph{static type} of $x_i$,
 \item the \emph{query} $\Phi(x_1,\ldots,x_n)$ is a set of ABox axioms
 using variables $x_1$, $\ldots$, $x_n$ as place holders for individual names.
\end{enumerate}
\noteInline{schould we drop the static types?}
\patrick{Relevant for practicality, as I see it.}
\end{definition}
A query specification defines a query predicate $p$ that can be used by the
planner to access the state of the DL perspective. A query $p(x_1,\ldots,x_n)$
translates to a complex query $\Phi(x_1,\ldots,x_n)$ that is evaluated on the ontology
perspective. The \emph{static types} $C_1$, $\ldots$, $C_n$ specify the
range of the variables \emph{independently of the current state}, which
makes the definition more transparent and allows for a more efficient
implementation of our planning approach. For such a query specification,
we call a vector $\tup{a_1,\ldots,a_n}$ of individual names a \emph{candidate}
in an ontology $\Ostatic$ if $\Ostatic\models C_i(a_i)$ for $1\leq i\leq n$.
In our example, we would define
\[
 \pddl{fullHands}(x: \owl{Robot})\leftarrow \owl{FullHands}(x),
\]
but static types and concepts can also be more complex.

We have now all ingredients to define ontology-mediated planning specifications.

\begin{definition}
    An \emph{ontology-mediated planning specification} (OMPS) is a tuple
    $\OP=\tup{\PDDLSpec,\Ostatic,\FluentInter,\QueryInter}$, where
    \begin{itemize}
        \item \PDDLSpec is a PDDL planning specification ,
        \item $\Ostatic$ is an ontology called the \emph{static ontology},
        \item $\FluentInter$ is a fluent interface,
        \item $\QueryInter$ is a query interface.
    \end{itemize}
\end{definition}

An OMPS determines which ontology-enhanced states are compatible: a
state
$q=\tup{\state_q,\Omc_q}$ is \emph{compatible} to an OMPS
$\OP=\tup{\PDDLSpec,\Ostatic,\FluentInter,\QueryInter}$, where
$\Der$ are the derivation rules in $\PDDLSpec$, iff:
\begin{enumerate}[label=\textbf{C\arabic*},leftmargin=*]
 \item\label{comp:state} $\state_q$ is a set of atoms over predicates and constants
occurring in $\PDDLSpec$;
 \item\label{comp:static} $\Ostatic\subseteq\Omc_q$;
 \item\label{comp:fluents} for every $\alpha\in \Der(\state_q)$ for which
$\FluentInter(\alpha)$ is defined;
$\FluentInter(\alpha)\in\Omc_q$
 \item\label{comp:minimal} $\Omc_q$ only contains axioms required
due to Conditions~\ref{comp:static} and~\ref{comp:fluents};
 \item\label{comp:query}
 for every query specification
 $p(x_1: C_1,\ldots, x_n: C_n) \leftarrow \Phi(x_1,\ldots,x_n)$ and
 every candidate $\tup{a_1,\ldots,a_n}$ in $\Ostatic$ s.t.
 $\Omc_q\models\Phi(a_1,\ldots,a_n)$ and $\FluentInter^-(a_i)$ is defined for all
 $1\leq i\leq n$, it is required that $p(\FluentInter^-(a_1),\ldots,\FluentInter^-(a_n))\in\state_q$.
\end{enumerate}

\newcommand{\ext}{\textsf{ext}}

Every planning state $\state$ without query predicates can be uniquely extended to an
ontology-enhanced state $\ext(\state,\OP)$ compatible with $\OP$ where $\state$ is
extended only by query predicate atoms.
\hide{
Given an ontology-mediated planning specification
$\textbf{OP}=\tup{\PDDLSpec,\Ostatic,\FluentInter,\QueryInter}$ and a state $P$ in the
corresponding planning domain, we define the
\emph{extension $\ext(P,\textbf{OP})$ of $P$ according to $\textbf{OP}$}
as follows.
Let 1) $P'$ be the set of
atoms in $P$ that are not over query predicates, 2) $\Omc_q$ the set of axioms
required to satisfy Conditions~\ref{comp:static} and~\ref{comp:fluents} based
on the
atoms in $P'$, and 3) $P_q$ the extension of $P'$ by all atoms over query
predicates that are required to satisfy Condition~\ref{comp:query} for the
ontology $\Omc_q$. Then, $\ext(P, \textbf{OP})=\tup{P_q,\Omc_q}$.
}

It remains to define the semantics of actions and plans on OMPSs. Fix
an OMPS
$\OP=\tup{\PDDLSpec,\Ostatic,\FluentInter,\QueryInter}$ with derivation rules $\Der$. Let $\act$ be a ground action
with precondition $\pre$ and effect $\eff=\tup{\add,\del}$. Let $q$ be an
ontology-enhanced state. We say that $\act$ is \emph{applicable} on $q$ iff
$\Der(\state_q)\models\pre$. The result of \emph{applying $\act$ on $q$}
is then denoted
by $q(\act)$ and defined as $q(\act)=\ext(\state(\act),\OP)$.
We can now define
\emph{plans for \textbf{OP}} similarly as we did for planning specifications:
Namely, a plan is a sequence $\act_1\ldots \act_n$ of actions that generates a
sequence
$q_0q_1\ldots q_n$ of ontology-enhanced states s.t.
\begin{enumerate}[label=\textbf{P\arabic*},leftmargin=*]
 \item $q_0=\ext(\state_0,\OP)$, where $\state_0$ is the initial state of \PDDLSpec,
 \item for each $i\in\{1,\ldots,n\}$: $\act_i$ is applicable on $q_{i-1}$,
    $q_{i}=q_{i-1}(\act_i)$, and $q_i$ is consistent, and
 \item $\Der(\state_{q_n})\models G$, where 
 $G$ is the goal of the planning
problem.
\end{enumerate}
%
%
%

\todo[inline]{TELL and ASK in classical function view}
\hide{
\begin{itemize}
	\item explain structure of fluents + hooks

	--> correspond to TELL and ASK in classical functional view

	\item mention: need to restrict queries (not the usual ECQs)

	\item formal definition of queries (hooks): conjunction of

	--> semantic of query: can we make this similar to \cite{BorgwardtHKKNS2022}?

	\item formal definition of (our version) of eKABs

\end{itemize}
}

\subsection{Planning Procedure}

OMPS are strongly related to eKABs, the main differences being: 1)~eKABs do not explicitly
define an interface, and directly integrate actions and DL axioms, and
2)~eKABs allow for queries and effects with existential quantification,
while our query predicates always translate to queries in which every variable is
mapped to a query variable. eKABs without existentially quantified variables can
thus easily translated into OMPSs, while OMPSs that only use unary and binary predicates
in the planning component can be
translated into eKABs. In the latter case we can use existing techniques for
planning with eKABs, given that they support the expressivity of the ontology part.
There are generally two techniques for planning of \text{eKABs}, which are both based on
compilation schemes \cite{Compilation_Schemes_2000} of \text{eKABs} into PDDL programs.
%
Such a compilation scheme handles the planning domain \emph{independently} of the planning problem.
The compilation schemes presented in~\cite{CalvaneseMPS2016}
and~\cite{BorgwardtHKKNS2022} only support limited families of DLs, namely
\emph{FO-rewritable DLs} in the first case and \emph{Horn DLs} in the second.
%
%
The approach in~\cite{BorgwardtHKKNS2022} works by translating ontology axioms into
derivation rules.
%
Since derivation rules have a single atom in the head, it is not clear
how such a translation would work for logics that are not Horn,
and likely it is impossible.

Our technique to planning with OMPSs is also based on rewritings into PDDL.
To support full OWL DL, and thus also non-Horn description logics, we deviate
however from the idea of compilation schemes.
Namely, our rewriting uses the planning domain together with the planning problem,
and thus solves already part of the planning problem,
which has the additional advantage that there is less work for
the planner to do afterwards.

The main idea is to instantiate all fluents and queries.
Fix a OMPS
$\OP=\tup{\PDDLSpec,\Ostatic,\FluentInter,\QueryInter}$. Roughly, we transform the
OMPS into a PDDL planning specification by adding to
$\PDDLSpec$ a derivation rule for each query predicate. This rule
considers all candidates for the query and specifies for each when
the query would become true.
%
%

%

We define the set $\Fluents$
of \emph{fluents} of $\OP$ as the set of all assertions $A(a)$ and $r(a,b)$ 
that occur in the image of $\FluentInter$. The fluents are the
only atoms in the ontology perspective that can be changed by actions.
Correspondingly, we can construct for any axiom $\alpha$ a formula that is
satisfied in the PDDL perspective of a state iff $\alpha$ is entailed by
the DL perspective:
\begin{align}
 \detect{\alpha}=\bigvee\Big\{\bigwedge_{\beta\in F}\FluentInter^-(\beta)\mid
    F\subseteq\Fluents, \Ostatic\cup F\models\alpha\Big\}\label{eq:detect}
\end{align}

For a given
query specification $q$, we denote by $\cand(q)$ its \emph{candidates}, i.e.
the vector of constants that are compatible with the type specification.
For a query specification $q$ as in \Cref{def:interfaces}, we construct the
corresponding derivation rule $\dr(q)$ as follows:
\begin{align}
 p(x_1,\ldots,x_n)\leftarrow\bigvee_{\tup{c_1,\ldots,c_n}\in\cand(q)}
 \left(
 \bigwedge_{i=1}^n x_i=c_i\wedge
 \bigwedge_{\alpha\in\Phi(\vec{c})}\detect{\alpha}\right)
\end{align}

In our running example, we
would have the following rule:
\begin{align*}
	\pddl{fullHands}(x)\leftarrow (&x=\pddl{stackBot}\wedge\\
	\big(&(\pddl{holds}(x,\pddl{blockA})\wedge\pddl{holds}(x,\pddl{blockB}))\\
	\vee&(\pddl{holds}(x,\pddl{blockA})\wedge\pddl{holds}(x,\pddl{blockC}))\\
	\vee&(\pddl{holds}(x,\pddl{blockB})\wedge\pddl{holds}(x,\pddl{blockC}))\big)
\end{align*}

We also use a rule $\pddl{Inc}\leftarrow\detect{\top\sqsubseteq\bot}$,
where $\pddl{Inc}$ is a fresh nullary derived predicate,
to
detect states that are inconsistent from the ontology perspective.
The rewriting $\rew(\OP)$ of the planning problem is now obtained
from $\PDDLSpec$ by adding $\neg\pddl{Inc}$ as a conjunct to the goal and to the precondition of every
action, and to the set of derivation rules the rule for $\pddl{Inc}$, as well as $\dr(q)$ for all $q\in\QueryInter$.
\begin{theorem}
 Let $\OP$ be an OMPS. Then, every plan for $\OP$ can be translated into a plan
 in $\rew(\OP)$, and every plan in $\rew(\OP)$ can be translated into a plan
 in $\OP$, by replacing each action by the corresponding action in the other
 specification.
\end{theorem}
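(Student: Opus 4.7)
The rewriting $\rew(\OP)$ keeps the same action set (the $\add$/$\del$ effects are unchanged) and only (i) appends $\neg\pddl{Inc}$ to preconditions and the goal and (ii) enlarges the derivation-rule set $\Der$ to a set $\Der'$ containing, additionally, the rule for $\pddl{Inc}$ and one rule $\dr(q)$ per query specification $q\in\QueryInter$. Hence the ``corresponding action'' in the statement is just the same action with the modified precondition. The plan is therefore to establish a tight correspondence between PDDL states of $\rew(\OP)$ and ontology-enhanced states compatible with $\OP$, and then to lift this pointwise correspondence to sequences. The crucial auxiliary claim is the following equivalence lemma:

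\begin{quote}
\emph{For every PDDL state $\state$ not containing query predicates or $\pddl{Inc}$, setting $q=\ext(\state,\OP)$, and every axiom $\alpha$ constructible from the signature,
$\Der'(\state)\models \detect{\alpha}$ iff $\Omc_q\models\alpha$.}
\end{quote}

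The proof of this lemma is the main technical step. For the ``if'' direction I would use the minimality clause~\ref{comp:minimal}: together with \ref{comp:static} and \ref{comp:fluents}, it forces $\Omc_q=\Ostatic\cup F_q$ where $F_q=\{\FluentInter(\beta)\mid \beta\in\Der(\state_q),\ \FluentInter(\beta)\text{ defined}\}\subseteq\Fluents$. If $\Omc_q\models\alpha$, then $\Ostatic\cup F_q\models\alpha$, so $F_q$ appears as one of the disjuncts in the definition~\eqref{eq:detect} of $\detect{\alpha}$; since each $\FluentInter^-(\beta)$ with $\beta\in F_q$ holds in $\Der(\state_q)\subseteq\Der'(\state)$, the corresponding conjunct is satisfied. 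The ``only if'' direction is easier: a satisfied disjunct gives a $F\subseteq\Fluents$ with $\Ostatic\cup F\models\alpha$, and by \ref{comp:static}/\ref{comp:fluents} every $\beta\in F$ lies in $\Omc_q$, whence $\Omc_q\models\alpha$. A small subtlety worth checking explicitly is that the additional rules in $\Der'\setminus\Der$ only derive query atoms and $\pddl{Inc}$, so they do not influence the truth of $\detect{\alpha}$ (whose atoms are images of $\FluentInter^-$ and thus not query predicates), making $\Der'(\state)\models\detect{\alpha}$ equivalent to $\Der(\state_q)\models\detect{\alpha}$.

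Given the lemma, I instantiate it at $\alpha=\top\sqsubseteq\bot$ to get $\pddl{Inc}\in\Der'(\state)$ iff $\Omc_q$ is inconsistent, and at each ABox axiom $\alpha\in\Phi(\vec{c})$ to conclude that $\dr(q)$ fires on a tuple $\tup{c_1,\dots,c_n}\in\cand(q)$ exactly when $\Omc_q\models\Phi(\FluentInter(c_1),\dots,\FluentInter(c_n))$. Together with the fact that $\ext(\cdot,\OP)$ leaves the non-query part of a state fixed, this shows that for compatible $q$, the derivable-atom sets $\Der'(\state_q)$ and those required by \ref{comp:query} plus $\Der(\state_q)$ coincide, and consistency of $q$ matches $\pddl{Inc}\notin\Der'(\state_q)$. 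Now the two plan translations become routine inductions on the length of the plan: in the forward direction, I take a plan $q_0,\act_1,q_1,\dots,\act_n,q_n$ of $\OP$ and project to $\state_{q_0},\act_1,\state_{q_1},\dots,\state_{q_n}$, using the lemma to verify that the additional $\neg\pddl{Inc}$ conjuncts in preconditions and goal are satisfied (by consistency of each $q_i$) and that the original preconditions and goal, which may reference query predicates, still evaluate correctly in $\Der'(\state_{q_i})$; in the backward direction, I lift a $\rew(\OP)$-plan $\state_0,\act_1,\dots,\state_n$ to $\ext(\state_0,\OP),\act_1,\dots,\ext(\state_n,\OP)$ and use the fact that $\act_i$ does not touch query predicates or $\pddl{Inc}$ (they are derived), so $\ext(\state_{i-1}(\act_i),\OP)=\ext(\state_{i-1},\OP)(\act_i)$. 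The main obstacle is the equivalence lemma above; once it is in place, everything else is bookkeeping.
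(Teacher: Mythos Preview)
The paper does not actually provide a proof of this theorem: it is stated at the end of Section~3.2 and the text immediately moves on to the computation of $\detect{\alpha}$, while the appendix only contains proofs of the two lemmas about justifications and of the correctness of the concept-based HST variant. So there is nothing to compare against; your plan is the natural argument one would expect, and it is essentially correct. The equivalence lemma you isolate---$\Der'(\state)\models\detect{\alpha}$ iff $\Omc_q\models\alpha$---is exactly the right hinge, and once it is established the two plan translations are indeed bookkeeping.

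Two minor points worth tightening. First, in the forward direction you write ``project to $\state_{q_0},\act_1,\state_{q_1},\ldots$'', but each $\state_{q_i}$ already contains the query-predicate atoms added by condition~\ref{comp:query}, whereas the states in a PDDL plan of $\rew(\OP)$ are obtained purely by applying effects to $\state_0$ (query atoms and $\pddl{Inc}$ arise only through $\Der'$). The projection should therefore be to the \emph{non-query} part of each $\state_{q_i}$, and you should note---as you already do in the backward direction---that this part evolves exactly as $\state_0(\act_1)\cdots(\act_i)$ because effects never touch derived predicates. Second, your reduction of ``$\Der'(\state)\models\detect{\alpha}$ iff $\Der(\state_q)\models\detect{\alpha}$'' relies on the query atoms produced by the new rules $\dr(q)$ coinciding with those present in $\state_q$; if $\Der$ contains rules whose bodies mention query predicates and whose heads are in the domain of $\FluentInter$, this step is not immediate and in principle requires a simultaneous fixpoint argument. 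The paper's (hidden) construction of $\ext$ computes $\Omc_q$ from the query-free part first and only afterwards adds query atoms, which breaks the apparent circularity; making this explicit would close the gap cleanly.
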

%
The feasiblity of our technique crucially depends on an efficient computation of $\detect{\alpha}$,
which we discuss in the next section.

\section{Optimized Generation of Justifications}
Since the axioms in the static
ontology $\Ostatic$ are always part of the DL perspective of a state, it makes no sense to
try to modify them with an action.
In the following, we thus assume $\Ostatic\cap \Fluents=\emptyset$.

Let $\Queries$ be the set of assertions for which we need to compute $\detect{\alpha}$.
Due to the monotonicity of entailment, it is sufficient to consider
those subsets $F\subseteq\Fluents$ in~\Cref{eq:detect} that are subset minimal.
Thus, to obtain $\detect{\alpha}$ we need to compute all pairs $\langle \alpha,F \rangle \subseteq \Queries \times 2^\Fluents$ s.t. $\Ostatic \cup F \models\alpha$ and $F$ is minimal, where $\Ostatic$ is the static ontology.
For this, we reduce our problem to the problem of finding minimal inconsistent subsets, also called \emph{justifications}~\cite{HST_Algorithm_OWL}.

\begin{definition}[Justification]
	Given an ontology $\Omc$,
	a subset of axioms $\Just \subseteq \Omc$ is a \emph{justification} iff $\Just \models \top \sqsubseteq \bot$ and for all $\Just' \subsetneq \Just: \Just' \not\models \top \sqsubseteq \bot$.
	We denote the set of all justifications by $\AllJust(\Omc)$.
\end{definition}

%

\noindent
Indeed, justifications can be used to compute what we need:
\begin{restatable}{lemma}{LemmaExplanationJustification}
	\label{lemma:explanationByJustification}
	For sets $\Omc$ and $\Fluents$ of axioms s.t. $\Omc\cap\Fluents=\emptyset$ and some axiom $\alpha$, the following two sets are identical:
	\begin{enumerate}
		\item $\{F \subseteq \Fluents \mid \Omc \cup F \models \alpha, F \textnormal{ is minimal}\}$
		\item $\{J \cap (\Fluents \setminus \{ \neg \alpha \}) \mid J \in \AllJust(\Omc \cup \Fluents \cup \{\neg \alpha\})\}$.
	\end{enumerate}
\end{restatable}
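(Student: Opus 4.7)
The plan is to prove the set equality by double inclusion, leveraging the classical equivalence $\Omc \cup F \models \alpha$ iff $\Omc \cup F \cup \{\neg\alpha\}$ is inconsistent, which lets us move freely between entailment questions and minimal-inconsistency (justification) questions over the ambient set $\Omc^* := \Omc \cup \Fluents \cup \{\neg\alpha\}$.

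For the inclusion Set~1 $\subseteq$ Set~2, I would start from a minimal $F \subseteq \Fluents$ with $\Omc \cup F \models \alpha$, conclude that $\Omc \cup F \cup \{\neg\alpha\}$ is inconsistent, and pick any subset-minimal inconsistent $\Just$ inside it. Such a $\Just$ is then automatically a justification of the larger $\Omc^*$, so $\Just \in \AllJust(\Omc^*)$. It remains to verify $\Just \cap (\Fluents \setminus \{\neg\alpha\}) = F$. The upper bound is immediate from $\Just \subseteq \Omc \cup F \cup \{\neg\alpha\}$ combined with $\Omc \cap \Fluents = \emptyset$. The lower bound $F \subseteq \Just$ is proved by contradiction: if some $\beta \in F$ were missing from $\Just$, then $\Just \subseteq \Omc \cup (F \setminus \{\beta\}) \cup \{\neg\alpha\}$ witnesses inconsistency of the latter, so $\Omc \cup (F \setminus \{\beta\}) \models \alpha$, contradicting the minimality of $F$.

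For the inclusion Set~2 $\subseteq$ Set~1, I would take an arbitrary $\Just \in \AllJust(\Omc^*)$ and set $F := \Just \cap (\Fluents \setminus \{\neg\alpha\})$. Since $\Omc$ and $\Fluents$ are disjoint, $\Just$ decomposes as $(\Just \cap \Omc) \cup F \cup (\Just \cap \{\neg\alpha\})$ (handling the edge case $\neg\alpha \in \Fluents$ separately), and inconsistency of $\Just$ lifts by monotonicity to $\Omc \cup F \cup \{\neg\alpha\}$, giving $\Omc \cup F \models \alpha$. For the minimality of $F$, the strategy is to show that a strictly smaller $F' \subsetneq F$ with $\Omc \cup F' \models \alpha$ would eventually lead to a strictly smaller inconsistent subset of $\Omc^*$, contradicting $\Just \in \AllJust(\Omc^*)$.

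The main obstacle I anticipate is precisely this minimality step in the second direction. Since the inconsistency of $\Omc \cup F' \cup \{\neg\alpha\}$ may be witnessed by axioms of $\Omc$ outside of $\Just \cap \Omc$, one cannot simply discard $F \setminus F'$ from $\Just$ and remain inconsistent on the nose. I expect the argument to proceed by a case split on whether $\neg\alpha \in \Just$ and whether $\Omc \cup F$ is itself inconsistent, together with a careful use of minimality of $\Just$ (each $\beta \in \Just$ is essential for the inconsistency of $\Just$). Some bookkeeping will also be needed for the special case where $\neg\alpha$ happens to lie in $\Fluents$, which is why the statement intersects with $\Fluents \setminus \{\neg\alpha\}$ rather than $\Fluents$.
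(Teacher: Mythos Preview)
Your plan for the inclusion Set~1 $\subseteq$ Set~2 is correct and is essentially the paper's argument: pick a minimal inconsistent $\Just \subseteq \Omc \cup F \cup \{\neg\alpha\}$, observe $F \subseteq \Just$ by minimality of $F$, and conclude $\Just \cap (\Fluents \setminus \{\neg\alpha\}) = F$ using $\Omc \cap \Fluents = \emptyset$ together with the easy fact that $\neg\alpha$ cannot belong to a minimal $F$.

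For the converse, you are right to flag minimality of $F := \Just \cap (\Fluents \setminus \{\neg\alpha\})$ as the obstacle, but your proposed case split cannot succeed, because the set equality is in fact false as stated. Take
\[
\Omc = \{A \sqsubseteq C,\ A \sqsubseteq D,\ D \sqcap B \sqsubseteq C\},\quad
\Fluents = \{A(a),\ B(a)\},\quad
\alpha = C(a).
\]
Then $\Just = \{A \sqsubseteq D,\ D \sqcap B \sqsubseteq C,\ A(a),\ B(a),\ \neg C(a)\}$ is a justification in $\AllJust(\Omc \cup \Fluents \cup \{\neg\alpha\})$, so $\{A(a), B(a)\}$ lies in Set~2; yet $\Omc \cup \{A(a)\} \models C(a)$ via $A \sqsubseteq C$, so $\{A(a), B(a)\}$ is not minimal and hence not in Set~1. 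Your intended contradiction with the minimality of $\Just$ never materializes: the strictly smaller fluent set $\{A(a)\}$ entails $\alpha$ only through the axiom $A \sqsubseteq C \in \Omc \setminus \Just$, so one cannot remove anything from $\Just$ and stay inconsistent.

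The paper's own proof of the $\supseteq$ direction in the appendix establishes only $\Omc \cup F \models \alpha$ and silently omits the minimality argument. For the intended application (computing $\detect{\alpha}$), what is really needed is the weaker pair of facts: Set~1 $\subseteq$ Set~2, and every $F$ in Set~2 satisfies $\Omc \cup F \models \alpha$. Both of these hold and are exactly what your proposal and the paper's proof do establish; non-minimal $F$'s merely contribute redundant disjuncts.
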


Many efficient algorithms for computing $\AllJust$ have been investigated in the
literature~\cite{PinpointingSAT,JustificationsResolution,PinpointingASP}.
In order  to support full OWL-DL, we use a \emph{black box} algorithm,
which uses a reasoner as a black box, and can thus be used with any existing reasoning system.
The following observations are specific to our problem, and help us to develop dedicated optimizations:
%
\begin{enumerate}[label=\textbf{O\arabic*}]
 \item\label{obs:inc} We need to generate justifications for \emph{inconsistency} as well as for many
	assertions at the same time.
 \item\label{obs:ass} Many \emph{axioms} for which we compute justifications are the same modulo renaming
	of individual names.
 \item\label{obs:just} Many of the \emph{justifications} will be the same modulo renaming of individual names.
\end{enumerate}

\subsection{Basic Algorithm}
\label{sec:basicAlgoritm}
Our basic algorithm is based on the Hitting-Set-Tree algorithm proposed by Reiter~\cite{HST_Algorithm_1987}, which can also be used to generate the set of all justifications for DL ontologies~\cite{HST_Algorithm_OWL}.

The algorithm assumes a method $\SingleJust(\Omc)$ that returns an arbitrary justification for $\Omc$. There are standard implementations available,
i.e. as part of the OWL API~\cite{OWL_API}, that compute $\SingleJust(\Omc)$ using
a black box approach.
The algorithm builds a \emph{hitting-set tree} (HST) where each node is labeled with a justification, and edges are labeled with axioms.
The HST has the property that, if a path from the root is labeled with axioms $\Amc$ and ends on a node labeled with a justification $J$, then
$J$ is a justification for $\Omc\setminus\Amc$. To compute the HST, we use a recursive algorithm that starts with an arbitrary justification $J$ for $\Omc$, and then generates successor nodes by considering all possibilities of removing an axiom $\alpha\in J$ from $\Omc$, so that eventually all justifications are found. More details on the algorithm and a proof why this algorithm works in general for DL ontologies can be found in \cite{HST_Algorithm_OWL}.

\begin{figure}[t]
	\input{hst-algorithm-pseudocode}
	\caption{Basic Hitting-Set-Tree algorithm}
	\label{algorithm-HST}
\end{figure}

The general algorithm is depicted in Figure~\ref{algorithm-HST}. For simplicity, we omit the common optimization of cutting branches, if their path condition is a superset of an already completed path, which we still use in
our implementation.
%
The main function \textsc{ComputeHST} builds the hitting-set tree given an ontology $\Ostatic$, a set of fluent axioms $\Fluents$, the set of all already found justifications $\JustSet$ and the content of the current path leading to the node. In lines 7--9, the algorithm checks if we already found a fitting justification, to reduce the number of calls to $\SingleJust$. Otherwise, we use $\SingleJust$ in Line~11. The function \textsc{Successors} generates the set of axioms to branch on. This function will be
changed in the optimized versions of the algorithm.
%
While the algorithm in~\cite{HST_Algorithm_OWL} branches on all axioms in the justification, we only branch on the fluent axioms. This is sufficient since we do not really need all justifications, but only the ones that differ in their intersections with~$\Fluents$.

\begin{theorem}
	\label{theorem:basicHST}
	Given an ontology $\Omc$ and a set $\Fluents$ of axioms, \textsc{AllJustifications}($\Omc \cup \Fluents$, $\Fluents$) computes all minimal subsets $F\subseteq\Fluents$ s.t. $\Omc\cup F\models\bot$.
	%
\end{theorem}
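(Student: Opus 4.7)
The plan is to establish completeness: every minimal $F^* \subseteq \Fluents$ with $\Omc \cup F^* \models \bot$ appears in the output $\{J \cap \Fluents \mid J \in \JustSet\}$. Soundness in the weak sense (every output $F$ satisfies $\Omc \cup F \models \bot$) comes for free, since each $J \in \JustSet$ was returned by \SingleJust{} on some subset of $\Omc \cup \Fluents$, hence is inconsistent, so $\Omc \cup (J \cap \Fluents) \supseteq J$ is inconsistent as well.

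The completeness argument has two parts. First, a set-theoretic reduction: for every minimal $F^*$, pick any minimally inconsistent $J^* \subseteq \Omc \cup F^*$ (which exists as $\Omc \cup F^*$ is inconsistent). Minimal inconsistency is preserved under embedding into larger ontologies, so $J^* \in \AllJust(\Omc \cup \Fluents)$. Clearly $J^* \cap \Fluents \subseteq F^*$, and since $\Omc \cup (J^* \cap \Fluents) \supseteq J^*$ is inconsistent, minimality of $F^*$ forces $J^* \cap \Fluents = F^*$. This mirrors one direction of Lemma~\ref{lemma:explanationByJustification}.

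Second, I prove by induction on $k := |\Fluents \setminus F^* \setminus \text{path}|$ the invariant that every call to \textsc{ComputeHST}$(\Omc', \Fluents, \JustSet, \text{path})$ with $\text{path} \subseteq \Fluents$, $\text{path} \cap F^* = \emptyset$, and $\Omc' = (\Omc \cup \Fluents) \setminus \text{path}$ eventually places some $J$ with $J \cap \Fluents = F^*$ into $\JustSet$. The top-level call meets these preconditions with $\text{path} = \emptyset$. At any such call the justification $J$ used (a cached $J' \in \JustSet$ satisfying $J' \cap \text{path} = \emptyset$ or the result of \SingleJust$(\Omc')$) is a subset of $\Omc'$. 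If $J \cap \Fluents \subseteq F^*$, then $\Omc \cup (J \cap \Fluents) \supseteq J$ is inconsistent, and minimality of $F^*$ yields $J \cap \Fluents = F^*$; we are done. Otherwise some $\alpha \in (J \cap \Fluents) \setminus F^*$ exists, \textsc{Successors} returns $\{\alpha\}$ among its branches, and the recursive call on $\text{path} \cup \{\alpha\}$ preserves the invariant with $k$ strictly decreased. The base case $k=0$ forces $\Omc' = \Omc \cup F^*$, so $J \cap \Fluents \subseteq F^*$ automatically and we conclude as above.

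The main obstacle is arguing that the fluent-only branching in \textsc{Successors} together with the caching in lines 7--9 still reach every minimal $F^*$. Fluent-only branching is lossless here because the witness $\alpha$ in the case split is guaranteed to lie in $\Fluents$, so branching on $J \cap \Fluents$ rather than on all of $J$ suffices. Caching is handled by observing that every cached $J'$ was originally produced as a minimally inconsistent subset of a subset of $\Omc \cup \Fluents$, hence $J' \subseteq \Omc \cup \Fluents$; the acceptance condition $J' \cap \text{path} = \emptyset$ then ensures $J' \subseteq \Omc'$, so the same case distinction applies whether $J$ comes from the cache or from a fresh call to \SingleJust.
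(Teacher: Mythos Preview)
Your proof is correct. The paper's own proof is a one-sentence deferral to Reiter's original results (Theorems~4.4 and~4.8 in~\cite{HST_Algorithm_1987}), noting that the fluent set $\Fluents$ plays the role of Reiter's set of \emph{components}. Your argument is a self-contained unfolding of what that citation encodes: the induction on $k=|\Fluents\setminus F^*\setminus\text{path}|$ is exactly the standard HST completeness argument, specialised to the variant that branches only on fluents. What your version buys is that the reader need not reconstruct the correspondence to Reiter's diagnosis framework; in particular you make explicit \emph{why} fluent-only branching suffices (the witness $\alpha$ in the case split is guaranteed to lie in $\Fluents$) and why the caching in lines~7--9 is harmless (any cached $J'$ with $J'\cap\text{path}=\emptyset$ still satisfies $J'\subseteq\Omc'$). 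The paper's version buys brevity. The underlying idea is the same in both. One minor remark: your Part~1 is not strictly needed for Part~2, since the induction never refers to the specific $J^*$ you construct there; but it does no harm and makes the link to Lemma~\ref{lemma:explanationByJustification} transparent. Note also that your argument tacitly uses $\Omc\cap\Fluents=\emptyset$ (to get $J^*\cap\Fluents\subseteq F^*$ from $J^*\subseteq\Omc\cup F^*$), which the paper assumes in the surrounding text but does not restate in the theorem.
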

\begin{proof}
		The theorem follows directly from the original proof by Reiter \cite[Theorems 4.4 and 4.8]{HST_Algorithm_1987}, where our set of fluents $\Fluents$ corresponds to the set of \enquote{components} in Reiter's proof.
	\end{proof}

It follows from \Cref{theorem:basicHST} and \Cref{lemma:explanationByJustification} that we can use this algorithm to compute all required pairs $\tup{\alpha,F}$ and thus $\detect{\alpha}$.

\subsection{Concept-Based Algorithm}
\label{sec:conceptAlgoritm}

Our first optimization of the basic algorithm considers
Observations~\ref{obs:inc} and~\ref{obs:ass}:
If we compute the justifications of $\Ostatic\cup\Fluents\cup\{\neg\alpha\}$ for each $\alpha$ separately, we will
compute all justifications for $\Ostatic\cup\Fluents$ several times.
%
We avoid these redundant computations with the \emph{concept-based algorithm}, which generates one HST that contains the justifications needed for all queries of the same form. For simplicity,
we assume that every $\alpha\in\Queries$ is a concept assertion, ie. an axiom of the form $C(a)$. This is
possible wlog since in OWL DL, every ABox axiom can be equivalently expressed as concept assertion.
%
%

Fix a concept $C$ and some $\SomeIndividuals \subseteq \NI$ so that $\Queries_C = \{C(a) \mid a \in \SomeIndividuals\}$ is a set of assertions for which we want to compute justifications.
We introduce a fresh concept name $A_C$,
and add to the ontology the axiom $C \sqcap A_C \sqsubseteq \bot$ and the assertions $\Queries_C' = \{A_C(a) \mid a \in \SomeIndividuals \}$.

The following lemma shows that this ontology can be used to generate the required sets of fluent.

\begin{restatable}{lemma}{explanationBatch}
		\label{lemma:explanationByJustificationBatch}
		For $\Omc' = \Ostatic \cup \Fluents \cup \{ C \sqcap A_C \sqsubseteq \bot \} \cup \Queries_C'$, the following
		are identical:
	\begin{enumerate}
		\item $\big\{\langle \alpha, F \rangle  \subseteq \Queries_C \times 2^\Fluents \mid \Ostatic \cup F \models \alpha, F \textnormal{ is minimal} \big\}$
		\item $\big\{\langle C(a), J \cap \Fluents \rangle \mid J \in \AllJust(\Omc', \Fluents)$, and

		\hphantom{$\big\{\langle C(a), J \cap \Fluents \rangle \mid$}
		\hphantom{$\big\{\langle C(a), J \cap \Fluents$}
		$A_C(a) \in J$ or $J\cap\Queries_C' = \emptyset\big\}$.
	\end{enumerate}
\end{restatable}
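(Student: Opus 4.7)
The proof relates this batch lemma to the per-assertion \Cref{lemma:explanationByJustification} via a key auxiliary observation: the combination $\{A_C(a), C \sqcap A_C \sqsubseteq \bot\}$ can simulate $\neg C(a)$ for entailment purposes. Since $A_C$ is a fresh concept name that does not appear in $\Ostatic \cup \Fluents$, for any $\Omc_1 \subseteq \Ostatic \cup \Fluents$ we have $\Omc_1 \models C(a)$ iff $\Omc_1 \cup \{A_C(a), C \sqcap A_C \sqsubseteq \bot\}$ is inconsistent. The nontrivial direction uses that any model $M$ of $\Omc_1$ with $a \notin C^M$ can be extended by setting $A_C^M := \{a\}$ to also satisfy the two extra axioms.

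For the forward inclusion ($\subseteq$), I fix $\langle C(a), F\rangle$ in the first set and split on whether $\Ostatic \cup F$ is consistent. In the consistent case, the auxiliary claim implies that $\Ostatic \cup F \cup \{A_C(a), C \sqcap A_C \sqsubseteq \bot\}$ is inconsistent, so it contains a justification $J \subseteq \Omc'$. This $J$ must include $A_C(a)$, since otherwise $J \subseteq \Ostatic \cup F$ would contradict consistency. Removing $A_C(a)$ from $J$ leaves a consistent set that still entails $C(a)$ via the model-extension trick; the minimality of $F$ in the first set then forces $J \cap \Fluents = F$. In the inconsistent case, a justification of $\Ostatic \cup F$ already lives inside $\Omc'$ without any interaction with $\mathcal{Q}_C'$, and its intersection with $\Fluents$ equals $F$ by minimality on both sides.

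For the backward inclusion ($\supseteq$), take $J \in \AllJust(\Omc')$ satisfying the disjunctive side condition. First I observe that $J$ can contain at most one $A_C(b)$ assertion, since two such assertions would each complete an inconsistency independently, so one could be dropped while preserving inconsistency, contradicting minimality of $J$. If $A_C(a) \in J$, the model-extension trick yields $\Ostatic \cup (J \cap \Fluents) \models C(a)$, and minimality of $J$ lifts to minimality of $J \cap \Fluents$ for this entailment: a smaller witness $F' \subsetneq J \cap \Fluents$ would give, via the auxiliary claim, a strictly smaller justification of $\Omc'$. If instead $J \cap \mathcal{Q}_C' = \emptyset$, then $J \subseteq \Ostatic \cup \Fluents$ is a justification there, so $\Ostatic \cup (J \cap \Fluents)$ is inconsistent and entails $C(a)$ for every $a \in I$, with minimality again inherited from $J$.

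The main obstacle is the minimality bookkeeping in both directions, in particular showing that minimal justifications of $\Omc'$ use at most one $A_C(b)$ assertion and translating minimality of $J$ inside $\Omc'$ into minimality of the fluent slice $J \cap \Fluents$ for the per-assertion entailment problem. An additional subtlety arises when $\Ostatic \cup F$ is inconsistent: the entailment of $C(a)$ is then trivial, and one must carefully align the two notions of minimality (for inconsistency versus for entailing $C(a)$) using the fact that the fresh $A_C$ assertions never shrink the set of possible fluent witnesses.
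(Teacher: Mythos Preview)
Your strategy matches the paper's: exploit freshness of $A_C$ so that $\{A_C(a),\,C\sqcap A_C\sqsubseteq\bot\}$ acts as a surrogate for $\neg C(a)$, and in the backward direction split on whether $J$ meets $\Queries_C'$. You are actually more careful than the paper in one place---you explicitly argue minimality of $J\cap\Fluents$ in the backward direction, which the paper's proof simply omits.

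The step that fails is your claim that a justification $J$ contains at most one assertion of the form $A_C(b)$. Your argument (``two such assertions would each complete an inconsistency independently, so one could be dropped'') is exactly what breaks in non-Horn logics. If $\Ostatic$ entails the disjunction $C(b)\lor C(b')$ without entailing either disjunct---for instance via $E(e)$, $E\sqsubseteq\exists r.C$, $E\sqsubseteq{\leq}2\,r.\top$, $r(e,b)$, $r(e,b')$, $b\neq b'$---then $\Ostatic\cup\{A_C(b),A_C(b'),C\sqcap A_C\sqsubseteq\bot\}$ is inconsistent, yet dropping either $A_C$-assertion restores consistency, so both must appear in a minimal $J$. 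For such a $J$ your model-extension step no longer yields $\Ostatic\cup(J\cap\Fluents)\models C(b)$, since $J\nsubseteq\Ostatic\cup(J\cap\Fluents)\cup\{A_C(b),\,C\sqcap A_C\sqsubseteq\bot\}$. The paper's own proof has the same gap at the corresponding step (it asserts ``$J'\models Q(i)$ as this is the only possibility that $J$ is inconsistent'' without ruling out that $J'$ merely entails a disjunction over several $Q(j)$), so your argument is not a deviation from the paper but a faithful reflection of an issue in the lemma as stated.
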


To compute those justifications, we run the algorithm $\textsc{AllJustifications}(\Omc', \Fluents \cup \Queries_C')$ with $\Omc'$ as in the lemma.
We need to include the additional assertions $A_C(a)$ as axioms to branch on, as we are also interested in justifications that differ only in the query that they entail. This is necessary because one combination of fluent axioms might entail several queries and in order to find all of them, we also need to branch by the query that is entailed, not only by the fluent axioms.


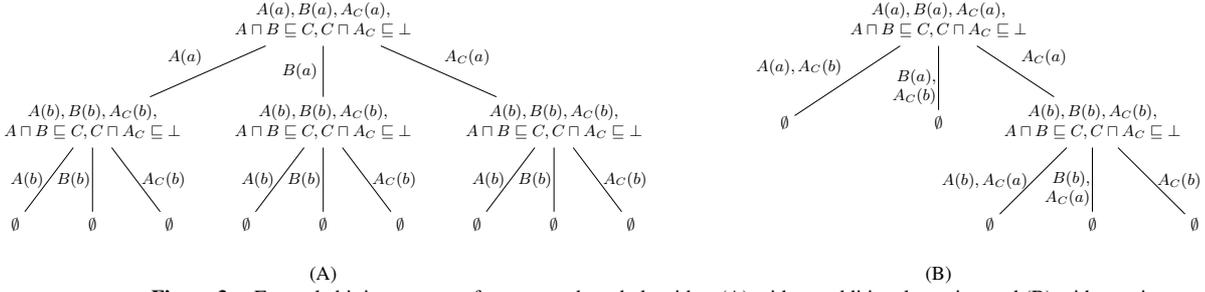
\begin{figure*}[tb]
{\centering
		\resizebox{0.9\textwidth}{!}{
			\begin{tikzpicture}
	\coordinate (originConcept) at (0,0);
	\coordinate (originPruned) at (12, 0);
	
	\node at (originConcept) {\exampleInc{a}}[sibling distance = 4.5cm, level distance=2cm]
	child {node {\exampleInc{b}}[sibling distance = 1.5cm] 
		child {node{$\emptyset$}
			edge from parent node[left] {$A(b)$}}	
		child {node {$\emptyset$}
			edge from parent node[left, xshift=2pt] {$B(b)$}}
		child {node{$\emptyset$}
			edge from parent node[right] {$A_C(b)$}}
		edge from parent node[above left] {$A(a)$}
	}
	child {node {\exampleInc{b}}[sibling distance = 1.5cm]
		child {node{$\emptyset$}
			edge from parent node[left] {$A(b)$}}	
		child {node {$\emptyset$}
			edge from parent node[left, xshift=2pt] {$B(b)$}}
		child {node{$\emptyset$}
			edge from parent node[right] {$A_C(b)$}}
		edge from parent node[left] {$B(a)$}
	}
	child {node {\exampleInc{b}}[sibling distance = 1.5cm]
		child {node{$\emptyset$}
			edge from parent node[left] {$A(b)$}}	
		child {node {$\emptyset$}
			edge from parent node[left, xshift=2pt] {$B(b)$}}
		child {node{$\emptyset$}
			edge from parent node[right] {$A_C(b)$}}
		edge from parent node[above right] {$A_C(a)$}
	};
	
	\node at (originPruned) {\exampleInc{a}}[sibling distance = 3cm, level distance = 2cm]
	child {node {$\emptyset$}
		edge from parent node[above left] {$A(a), A_C(b)$}
	}
	child {node {$\emptyset$}
		edge from parent node[left, xshift=8pt, yshift = -5pt] {\begin{tabular}{r}
				$B(a),$\\
				$A_C(b)$
		\end{tabular}}
	}
	child {node {\exampleInc{b}}[sibling distance = 2cm]
		child {node{$\emptyset$}
			edge from parent node[left] {$A(b), A_C(a)$}}	
		child {node {$\emptyset$}
			edge from parent node[left, xshift=8pt, yshift = -5pt] {\begin{tabular}{r}
					$B(b),$\\
					$ A_C(a)$
				\end{tabular}}}
		child {node{$\emptyset$}
			edge from parent node[right] {$A_C(b)$}}
		edge from parent node[above right] {$A_C(a)$}
	};
	
	\node at ($(originConcept) + (0, -5)$) {\large (A)};
	\node at ($(originPruned) + (0, -5)$) {\large (B)};
\end{tikzpicture}}
}
	\vspace{-5pt}
	\caption{Example hitting-set trees for concept-based algorithm (A) without additional pruning and (B) with pruning.}
	\label{fig:HSTConceptBased}
\end{figure*}

\begin{example}
	\label{example:HSTConcept}
	Consider the static ontology $\Ostatic = \{A \sqcap B \sqsubseteq C\}$, fluent axioms $\Fluents = \{A(a),B(a), A(b), B(b)\}$ and queries $\Queries = \{ \{C(a), C(b)\}$. Then, we add the axioms from $\Queries'= \{A_C(a), A_C(b)\}$ and the axiom $(C \sqcap A_C \sqsubseteq \bot)$, before we run the justification algorithm. The result is the tree in Figure~\ref{fig:HSTConceptBased}~(A). Overall, there is one justification for each query, i.e. the generated tuples are $\langle C(a), \{A(a), B(a)\} \rangle$ and $\langle C(b), \{A(b), B(b)\} \rangle$
\end{example}

The tree in the example (\Cref{fig:HSTConceptBased}) contains three nodes with the same justification. In general, the number of nodes in the tree can be exponential in the number of justifications. To reduce the size of the tree, we modify the HST algorithm to prune branches, while still ensuring that there is at least one node for each existing justification. We do this by modifying the function \textsc{Successors} as shown in \Cref{algorithm:conceptBased}. The idea is to have two kinds of sub trees: (i)~sub trees where we search for other justifications of the same query and (ii)~sub trees where we search for justifications of other queries. We achieve this by removing the introduced concept assertions for all individuals whose query is not explained in the node for all branches except the one that is already labeled with the introduced concept assertion. Overall, this results in a tree that has a chain-structure of connected smaller trees, one for each query.

\begin{figure}[tb]
	\begin{algorithmic}[1]
		\Function{Successors}{$\Just, \Fluents, \Branches$}
		\If{$ (C \sqcap A_C \sqsubseteq \bot) \not \in J$}
			\State $\Branches \leftarrow \{ \{B\} \mid B \in \Just \cap \Fluents\}$
		\Else
			\State $a\leftarrow a \in \NI$ s.t. $A_C(a) \in J$
			\State $\Branches \leftarrow \{\{ A_C(a) \}\}$
			\For{$\alpha\in \Just \cap (\Fluents \setminus \Queries_C')$}
			\State $B \leftarrow \{\alpha \} \cup \Queries_C' \setminus \{A_C(a)\}$
			\State $\Branches \leftarrow \Branches \cup \{ B\}$
			\EndFor
		\EndIf
		
		\EndFunction
	\end{algorithmic}
	\caption{Optimized branching for concept-based algorithm.}
	\label{algorithm:conceptBased}
\end{figure}


\begin{example}
	Consider the ontology, fluents and queries from Example~\ref{example:HSTConcept}. The resulting tree is depicted in Figure~\ref{fig:HSTConceptBased}~(B). Note, that there is now only one sub-tree that contains the justification for the individual $b$. Overall, the tree contains only 7 nodes whereas the tree without the pruning requires 13 nodes.
\end{example}

\begin{restatable}{theorem}{TheoConceptCorrectness}
	\label{theorem:ConceptCorrectness}
	Given an ontology $\Omc$, a set of fluent axioms $\Fluents$ and a set of queries $\Queries$, the concept-based algorithm computes
	all minimal subsets $F\subseteq\Fluents$ s.t. $\Omc\cup F\models\bot$.
\end{restatable}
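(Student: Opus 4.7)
The plan is to reduce correctness of the concept-based algorithm to the characterization provided by \Cref{lemma:explanationByJustificationBatch}, and then argue that the modified \textsc{Successors} routine (\Cref{algorithm:conceptBased}) still enumerates every justification of $\Omc'$ that we actually need. Concretely, I would first apply \Cref{lemma:explanationByJustificationBatch} to the batched ontology $\Omc' = \Ostatic \cup \Fluents \cup \{C\sqcap A_C \sqsubseteq \bot\} \cup \Queries_C'$: it suffices to show that for every justification $J \in \AllJust(\Omc')$ with either $A_C(a)\in J$ for some $a$ or $J\cap\Queries_C'=\emptyset$, some node of the constructed hitting-set tree is labeled with $J$. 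The soundness direction is immediate because every node label is returned by \SingleJust and is therefore a genuine justification; the work lies entirely in completeness.

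The key structural observation I would establish first is that for each $J\in\AllJust(\Omc')$ containing the disjointness axiom, exactly one $A_C(a)$ lies in $J$. This follows from minimality together with the fact that a single inconsistency is ultimately derived by combining the disjointness axiom with one derived $C(a)$ and its matching $A_C(a)$; adding further $A_C(b)$ would violate minimality. Consequently, justifications of $\Omc'$ partition naturally into ``groups'' indexed by the individual $a$ whose query they witness (plus a separate group of justifications containing no axiom of $\Queries_C'$).

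Next I would prove completeness by induction on the depth of nodes in the tree, following Reiter's original argument but accounting for the modified branching. Fix a target justification $J$. If $J$ contains no axiom of $\Queries_C'$, the argument reduces to the unoptimized basic algorithm since the standard fluent-branching path (lines~2--3 of the modified \textsc{Successors}) is triggered at every ancestor whose label avoids the disjointness axiom, so the usual HST argument of \Cref{theorem:basicHST} goes through. If instead $A_C(a)\in J$, I would show by induction on $|J\setminus J_0|$, where $J_0$ is the justification at the current node, that some descendant is labeled with $J$. The crucial sub-claim is that the branches $\{\alpha\}\cup(\Queries_C'\setminus\{A_C(a)\})$ for $\alpha\in J_0\cap(\Fluents\setminus\Queries_C')$ are sufficient to reach $J$ whenever $J\neq J_0$ but $A_C(a)\in J_0\cap J$: by minimality of $J_0$, any such $J$ must exclude at least one $\alpha\in J_0\cap\Fluents$, and by the structural observation above it contains no other $A_C(b)$, so the axioms removed along the corresponding branch are all absent from $J$, meaning $J$ remains a justification of the restricted ontology. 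The remaining sibling branch $\{A_C(a)\}$ deletes $A_C(a)$ and is used only to reach justifications indexed by a different individual.

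The main obstacle I anticipate is justifying this last sub-claim carefully, because the pruning removes several axioms simultaneously rather than a single axiom as in Reiter's classical scheme. I would discharge it by showing that for any $J$ with $A_C(a)\in J$ and $J\neq J_0$, choosing $\alpha\in J_0\setminus J$ (non-empty by minimality) yields $J\cap(\{\alpha\}\cup\Queries_C'\setminus\{A_C(a)\})=\emptyset$, so $J\subseteq\Omc'\setminus(\{\alpha\}\cup\Queries_C'\setminus\{A_C(a)\})$ and $J$ is still a justification there; the inductive hypothesis then applies in the corresponding child subtree. Combined with the partitioning observation, this shows every relevant justification is eventually produced, and applying \Cref{lemma:explanationByJustificationBatch} yields the claimed correctness.
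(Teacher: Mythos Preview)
Your overall strategy is sound and shares the key structural observation with the paper: any justification of $\Omc'$ that contains $C\sqcap A_C\sqsubseteq\bot$ contains exactly one assertion $A_C(a)$. However, your decomposition differs from the paper's. The paper does not argue directly that every target justification is reached; instead it proves a \emph{simulation} statement: for every node $v$, any justification that labels a child of $v$ in the tree of the \emph{basic} algorithm (\Cref{theorem:basicHST}) also labels a child of $v$ in the tree of the concept-based algorithm. Correctness then follows immediately from the already-established correctness of the basic algorithm. Your direct argument avoids the detour through \Cref{theorem:basicHST}, which is conceptually cleaner, but the paper's reduction is shorter because the hard work is delegated to Reiter's theorem.

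There is one genuine technical gap in your sketch. The induction on $|J\setminus J_0|$ does not work: when you pass from a node labeled $J_0$ to a child labeled $J_0'$, nothing forces $|J\setminus J_0'|<|J\setminus J_0|$, since $J_0'$ is an arbitrary justification of the restricted ontology (possibly even a cached one from $\JustSet$). The correct decreasing measure is the size of the current ontology, or equivalently the path length, which strictly grows because every branch removes at least one axiom; the argument is then that at each node either the label equals $J$ or some branch $B$ satisfies $B\cap J=\emptyset$, so $J$ survives into that child. Relatedly, your claim ``by minimality of $J_0$, any such $J$ must exclude at least one $\alpha\in J_0\cap\Fluents$'' is not quite right as stated: $J_0\setminus J$ could lie entirely in $\Ostatic$. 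In that case $J_0$ and $J$ have the same intersection with $\Fluents\cup\Queries_C'$, so they yield the same output pair and you need not find $J$ separately; you should make this case explicit rather than assert the existence of a fluent $\alpha$.
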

	
\subsection{Schema-Leveraging Algorithm}
\label{sec:schemaAlgoritm}
Looking at the generated HSTs, e.g. in \Cref{fig:HSTConceptBased}, we observe that many justifications are structurally the same and only differ in the individuals used in the ABox axioms (Observation~\ref{obs:just}). This is especially undesired in cases where there are many individuals that occur in the same Abox axioms. This can e.g. happen if the individuals describe waypoints that can all be occupied by all mobile objects.
Calling \SingleJust is relatively costly, since each time it has to explore the space of subsets of the current ontology, and check their consistency using calls to the black box reasoner.
We can thus improve the run time of our algorithm significantly if we can reduce the number of calls to
\SingleJust.


We fix a set $\Xbf$ of \emph{variables}.
A \emph{valuation} is a partial function $\val:\Xbf\nrightarrow\NI$. An \emph{axiom pattern} is an axiom where some individual names may be replaced by variables. Given an axiom pattern $\AxPattern$, $\val(\AxPattern)$ denotes the set of axiom patterns obtained from \AxPattern by replacing each variable $x$ for which $\val(x)$ is defined by $\val(x)$.

\begin{definition}
	A \emph{justification schema} is a pair $\tup{\Jscheme,\Val}$ s.t. $\Jscheme$ is a set of axiom patterns, $\Val$ is a set of valuations, and for each $\val\in\Val$, $\val(\Jscheme)$ is a subset-minimal set s.t. $\val(\Jscheme)\subseteq\Omc$ and $\val(\Jscheme)\models\top \sqsubseteq \bot$. 
	We then call $\val(\Jscheme)$ an \emph{instantiation of $\tup{\Jscheme,\Val}$}.
\end{definition}

Whenever a justification $\Just$ is found, we construct the corresponding axiom patterns \AxPattern by replacing every individual by a distinct variable in the axioms of the justification. After that, we identify the associated justification schema $\tup{\Jscheme,\Val}$, i.e. we search in the ontology ($\Ostatic \cup \Fluents \cup \Queries_C'$) for fitting axioms to compute all valid valuations $\val$, taking into account also the static type of the query. This check is purely syntactic and therefore much faster than computing the justifications
with a reasoner.
%
%
Using the justification schema, we compute all justifications that follow the same schema by computing $\val(\Jscheme)$ for all valuations $\val$ in the schema.
The found justification are then added to the set of all found justifications.
This change in the algorithm does not affect the correctness result from Theorem~\ref{theorem:ConceptCorrectness} as it does not affect the structure of the hitting-set tree.

\section{Evaluation}
\subsection{Implementation}
We implemented all the tailored hitting-set-tree algorithms.\footnote{The implementation and scripts to reproduce the evaluation can be found online~ \cite{ZenodoArtifact2024}.} We used the reasoner system HermiT \cite{GlimmHMSW2014} for our implementation but it is in general agnostic to which reasoner is used. 


We add the generated derivation rules to the otherwise unchanged PDDL domains. We used the fast-downward planning system \cite{Helmert2006} with the heuristic A* for planning. Although not the most advanced heuristic, this is, as far as we know, the best that can handle the structure of the derivation rules as generated by our algorithm.

\subsection{Experiments}
\paragraph{Benchmark Sets}
As we are the first to present a planning approach that works with OWL-DL ontologies, finding benchmark sets that use ontologies based on OWL-DL is difficult. In total, we gathered 116 planning problems from five different domains. Three benchmark domains (drones, queens and robotConj) come from existing work on planning with Horn-DLs \cite{BorgwardtHKKNS2022}. We choose from this work the benchmark sets with the the most expressive TBox axioms, namely the ones that go beyond DL-Lite, as we are looking for ontologies based on expressive DLs. 

The two other benchmark sets were created by ourselves. The domain \enquote{blocksworld} is adapted from the blocksworld domain as described in Example~\ref{example:overview}. The ontology describes the parts and capabilities of a robot interacting with the blocks. The axioms in this ontology go beyond what can be handled by other existing approaches. The domain \enquote{pipes} describes an inspection task for an autonomous underwater robot. The planning domain describes a grid of waypoints and actions of moving and interacting with valves. The ontology describes the objects, their location at the waypoints and their relations, e.g. connections between pipe sections. 

\begin{table}[tb]
	\caption{Axiom numbers and query candidates in the benchmarks.}
%
	\label{table:statistics}
	\resizebox{\columnwidth}{!}{
	\begin{tabular}{l r r r   r r r r}
		\textbf{domain} &  \multicolumn{2}{c}{\textbf{ ontology}} & \multicolumn{1}{c}{\textbf{TBox }} & \multicolumn{2}{c}{\textbf{ fluent}} & \multicolumn{2}{c}{\textbf{query candidates}} \\
		& min & max & & min & max & min & max\\
		drones & 168 & 622 & 17 &  52 & 202 & 1,300 & 20,200 \\
		queens & 118 & 463 & 17 & 25 & 100 & 675 & 10,200 \\
		robotConj & 40 & 306 & 30--147 & 20& 134 & 20 & 134 \\
		blocksworld & 11 & 17 & 5 & 4 & 10 & 1 & 1\\
		pipes & 118 & 416 &43 & 33 & 207 & 16 & 111 
	\end{tabular}
	}
\end{table}

As shown in Table~\ref{table:statistics}, the benchmark domains have different characteristics. While most rely on an ontology with few TBox axioms, the domains \enquote{pipes} and \enquote{robotConj} have a more detailed TBox. Also, the number of query candidates for which we search for justifications varies a lot, e.g. \enquote{drones}  and \enquote{queens}  have thousands of them while \enquote{blocksworld} has only one.

\paragraph{Setup}
We used an overall time limit of 30min for the plan computation and
used a memory limit of 8GB for our experiments. We ran the experiment on a server with an Intel Xeon Gold 5118 CPU @2.30GHz running Ubuntu~20.04. We consider four planning
algorithms. The algorithms \textsc{Basic}, \textsc{Concept} and \textsc{Schema}
follow the procedure described in this paper and use the respective algorithms
from \Cref{sec:basicAlgoritm}, \Cref{sec:conceptAlgoritm} and
\Cref{sec:schemaAlgoritm}. The algorithm \textsc{Horn} is the best performing
approach to do ontology-mediated planning as of
today~\cite{BorgwardtHKKNS2022}, but only supports the problems with Horn
ontologies.
%

\paragraph{Results}

\begin{table}[tb]
	\caption{Evaluation of different planning algorithms on different domains.}
	\label{table:solvedInstances}
	\centering
	\resizebox{\columnwidth}{!}{
	\begin{tabular}{lrrrrr }
		\textbf{domain} & \textbf{\#instances} & \multicolumn{4}{c}{\textbf{\#solved instances}} \\
		&&  \textsc{Basic} & \textsc{Concept} & \textsc{Schema} & \textsc{Horn}  \\[0.2cm]		
		{drones} & 24   & 0 & 9 &16 & \textbf{24}\\
		{queens} & 30 & 11 & \textbf{25} & \textbf{25} & 23 \\
		{robotConj} & 20  & 4 & 4& 4 & \textbf{20}\\
		{blocksworld} & 21 & \textbf{14} & \textbf{14} & \textbf{14} & --- \\
		{pipes} & 21  & 17 & \textbf{19}& \textbf{19} & 14 \\

	\end{tabular}
}
\end{table}

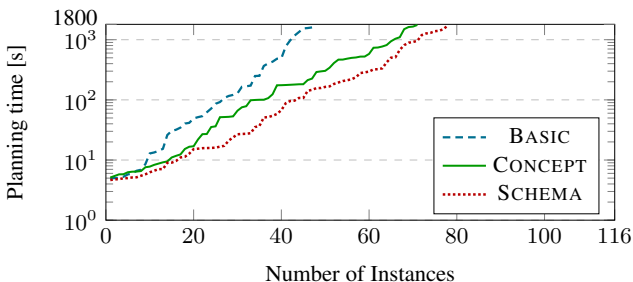
\begin{figure}[tb]
	\resizebox{\columnwidth}{!}{

\begin{tikzpicture}
	\begin{axis}[
		width=\columnwidth,
		height=0.5\columnwidth,
		title={},
		xlabel={Number of Instances},
		ylabel={Planning time [s]},
		ymode=log,
		xmin=0, xmax=116,
		ymin=1, ymax=1800,
		xtick={0,20,40,60,80,100, 116},
		ytick={1,10, 100, 1000},
		extra y ticks={1800},
		extra y tick style={
			grid=minor,
			yticklabel={1800},
			yticklabel style={yshift=0.7ex, anchor=east}},
		legend pos=south east,
		ymajorgrids=true,
		grid style=dashed,
		]

		\addplot[
		color=MidnightBlue,
		line width=0.9pt,
		densely dashed
		]
		 table [
		x=instance, 
		y=basic, 
		col sep=comma
		] {plotData.csv};
		
		\addplot[
		color=OliveGreen,
		line width=0.8pt
		] 
		table [
		x=instance, 
		y=concept,
		col sep=comma
		] {plotData.csv};
		
		\addplot[
		color=BrickRed,
		line width=1pt,
		densely dotted
		] 
		table [
		x=instance, 
		y=schema, 
		col sep=comma
		] {plotData.csv};

		\legend{\textsc{Basic}, \textsc{Concept}, \textsc{Schema}}
		
	\end{axis}
\end{tikzpicture}
	}
	\vspace{-15pt}
	\caption{Cactus plot showing how many instances could be solved by the different algorithms within a given time bound. The y-axis is logarithmic.}
	\label{graph:time}
\end{figure}

Table~\ref{table:solvedInstances} shows how many instances of the different domains could be solved by the different algorithms. The two introduced optimizations of the basic algorithm impact the number of solved instances. 
The optimizations are most relevant for the benchmark domains \enquote{drones} and \enquote{queens} while there is little gain on the other benchmark domains. For two of them (\enquote{blocksworld} and \enquote{pipes}) the planner cannot solve the generated planning problem in time but the justifications for all instances are generated by \textsc{Concept} and \textsc{Schema} and the optimizations are not necessary.
The remaining benchmark domain (\enquote{robotConj}) is especially hard to reason about and neither of our optimizations helps with that. The main problem with this domain is that there are many inconsistent fluent combinations, which result in many additional nodes in the HSTs.

We do not see a correlation between the number of solved instances and the complexity of the ontology axioms. Notably, the justifications for the benchmark domain \enquote{blocksworld} could always be generated in under 10s. The number of queries affects the performance, as expected, but our evaluation also shows that our optimized algorithms can solve instances with thousands of queries, e.g. in benchmark \enquote{queens}.

To compare the impact of our optimizations in more detail, Figure~\ref{graph:time} shows a cactus plot with the number of instances---considering all domains---and the time within they could be solved. We see that there are huge differences between the approaches (note that the y-axis is logarithmic). Especially interesting is that the curve of \textsc{Schema} is consistently lower that the curve of \textsc{Concept}. This shows that the additional effort of computing the justification schemata is well spend, even on simple instances.

When comparing our algorithms with \textsc{Horn}, we see that our best-performing algorithm \textsc{Schema} is able to solve more instances for some domains while \textsc{Horn} can solve more instances in other domains. Since \textsc{Horn} only supports a fragment of OWL DL,
while we support all of it, better performance of \textsc{Horn} is to be expected on some cases,
since we did not optimize for the peculiar aspects of
Horn DLs.
%
%
%
In general, the reasoning phase is slower for our algorithm because we do reasoning on the individuals of the ontology, while \textsc{Horn} only rewrites the axioms of the TBox into derivation rules in PDDL. On the other side, the structure of our generated rules allow for faster execution of the PDDL planner, e.g. for the domain \enquote{pipes} the median number of states that was evaluated per second was 9,000 for \textsc{Horn} but 140,000 for \textsc{Schema}. This leads to shorter times of the planning phases for our algorithms compared to \textsc{Horn}. A more detailed comparison of the execution times can be found in the %
\tr{appendix}\submission{extended version of the paper \cite{EXTENDED}}.
Overall, selecting the best planning approach depends on the specific ontology and how difficult it is to find a valid plan, i.e. how many states need to be evaluated.

\section{Conclusion}
We presented a new framework for integrating ontologies into planning, as well as
the first approach to do planning with ontologies based on very expressive ontology languages, namely supporting OWL-DL. Central to our method is an efficient generation of justifications for the DL queries in the planning specification. While this worked well for some examples, there is still room for improvement as our approach does not work well on all domains. Mainly, the handling of inconsistent fluent assignments could be improved. Another problem is obtaining realistic benchmark domains. We are not aware of examples that make use of complex OWL axioms and some of the existing benchmarks seem to be very academic. Hence, developing a better set of benchmark domains should be a future goal.



\begin{ack}
Tobias John is part of the project REMARO that has received funding from the European Union's Horizon 2020 research and innovation programme under the Marie Sk\l{}odowska-Curie grant agreement No~956200.
Patrick Koopmann is partially supported by the German Research Foundation (DFG), grant 389792660 as part of TRR 248 --– CPEC.
\end{ack}



\bibliography{bib-dl-planning}

\noteInline{Page limit: 8 (7 pages content + 1 page for references)}

\tr{\newpage
\appendix
\section{Proofs of Lemmas and Theorems}

\LemmaExplanationJustification*
\begin{proof}
		\enquote{$\subseteq$}: Let $F \subseteq \Fluents$ be a minimal set with $\Omc \cup F \models q$. Then, there is a minimal set $J \subseteq \Omc \cup F$ with $J \models q$. Because $F$ is minimal, it is completely contained in $J$, i.e. $F \subseteq J$. $J \cup \{ \neg q\} \models \top \sqsubseteq \bot$, because $q$ can be inferred from $J$. Hence, $J \cup \{ \neg q\}$ or $J$ are a justification for the ontology $\Omc \cup \Fluents \cup \{\neg q\}$ (because $J$ is minimal). Using the facts that $\neg q \not\in F$, because $\neg q$ can not be part of a minimal explanation of $q$, and $\Omc \cap \Fluents = \emptyset$ it follows that $F = J \cap (\Fluents \setminus \{ \neg q \})$.

		\noindent
		\enquote{$\supseteq$}: Let $J \in \AllJust(\Omc \cup \Fluents \cup \{\neg q\})$ be a justification. Let $J' = J \setminus \{ \neg q\}$ and we show that $J' \models q$. If $\{\neg q\} \not \in J$, we know that $J'$ is inconsistent and models everything, including $q$. If $\{\neg q\} \in J$, we know that $J'$ is consistent as all justifications are minimal. Hence, we can infer $q$ from $J'$ as this is the only reason for an inconsistency together with $\neg q$. We split $J'= \Omc' \cup F$ such that $\Omc' \subseteq \Omc$ and $F \subseteq \Fluents$. Because $J' \models q$, we infer that $\Omc' \cup F \subseteq \Omc \cup F \models q$.
\end{proof}

\TheoConceptCorrectness*
\begin{proof}
   Correctness of the algorithm is trivial as a black-box explanation
generator is used to generate the justifications.

   The argumentation for completeness needs more work. According to
\Cref{theorem:basicHST}, the algorithm works correctly, if it only
branches by fluent and query axioms independently. We need to show, that the
additional restrictions, i.e. the $A_C(a)$ axioms on the branches do not omit
valid justifications. To do so, we fix a node $v$ and show that each of the
justifications occurring in one of the child nodes in the tree of the original
algorithm can also occur in a child node in the tree of the concept-based
algorithm. Assume that the node $v$ contains the axiom $C \sqcap A_C
\sqsubseteq \bot$. Otherwise, the labels on the branches are the same as for the
basic algorithm, so the algorithm is trivially correct. Then, the node $v$
contains exactly one assertion of the form $A_C(a)$ (at least one, because
otherwise $C \sqcap A_C \sqsubseteq \bot$ is not required; at most one,
because of the minimality of justifications.) Every justification in a child
that does not contain $A_C(a)$ can occur in the tree below the branch labeled
with $A_C(a)$. Every other child contains $A_C(a)$ and therefore no other
$A_C(b)$ (because of the minimality of justifications). In the tree of the
basic algorithm, there is a fluent axiom $\alpha$ that is a label of the
branch to the sub-tree that contains the child. In the tree of the
concept-algorithm, the child node can occur in the sub-tree where the
branch is
labeled with $\{f\} \cup \{A_C(b) | b \in \NI, b\neq a \}$.
\end{proof}

\explanationBatch*
\begin{proof}
	\enquote{$\subseteq$}: Let $\langle Q(i), F \rangle$ with $\Omc \cup F \models Q(i)$ such that $F$ is minimal. Then, there is a minimal set $J \subseteq \Omc \cup F$ with $J \models Q(i)$. Because $F$ is minimal, it is completely contained in $J$, i.e. $F \subseteq J$. As $Q(i)$ can be inferred from $J$, the set $J' = J \cup \{Q \sqcap Q' \sqsubseteq \bot,  Q'(i)\}$ is inconsistent. Hence, one of the sets $J$, or $J'$ is a justification for the inconsistency of $O'$ (because $J$ is minimal). The sets $J \cup \{Q'(i)\}$ and $J \cup \{Q \sqcap Q' \sqsubseteq \bot\}$ can not be such justifications because the concept $Q'$ only occurs in the axiom $Q \sqcap Q' \sqsubseteq \bot$ and $Q'$ does not occur in $\Omc$ or $F$. Hence, if $J \cup \{Q'(i)\}$ or $J \cup \{Q \sqcap Q' \sqsubseteq \bot\}$ is inconsistent, so is $J$.
	No matter if $J$ or $J'$ is a justification, they are relevant for the second set and $J \cup \Fluents = J' \cup \Fluents = F$. Hence, $\langle Q(i), F \rangle$ is contained in the second set.	
	
	\noindent
	\enquote{$\supseteq$}: Let $J \in \AllJust(\Omc')$ be a justification. We distinguish two cases. (i) If $J \cup \Queries' = \emptyset$, the pair $\langle Q(i), J \cap \Fluents\rangle$ is in the set for all $Q(i) \in \Queries$. Because $J$ is minimal, contains no assertion from $\Queries'$ and the concept $Q'$ occurs nowhere else, we know that $\{ Q \sqcap Q' \sqsubseteq \bot\} \not \in J$. Hence, $J \subseteq \Omc \cup F$. Let $F = (J \cap \Fluents)$ be the set of all fluent axioms in $J$. As $J$ is inconsistent, so is $\Omc \cup F$ and therefore $\Omc \cup F \models Q(i)$ for all $Q(i)$. 
	(ii) If there is a $Q'(i)$ with $Q'(i) \in J \cup \Queries'\}$, we know that $J' = J \setminus \{Q \sqcap Q' \sqsubseteq \bot\}$ must be consistent as $J$ is minimal. Therefore, $J' \models Q(i)$ as this is the only possibility that $J$ is inconsistent. As the concept $Q'$ occurs in neither $\Omc$ nor $\Fluents$, the assertion $Q'(i)$ is not be necessary to infer $Q(i)$ in $J'$ and we infer that $J' \cap (\Omc \cup \Fluents) \models Q(i)$. Therefore, there is a set $F = J' \cup \Fluents = J \cup \Fluents$ with $\Omc \cup F \models Q(i)$. 
\end{proof}

\section{Further Plots}

\begin{figure*}[tbh]
	\resizebox{0.32\textwidth}{!}{
		\begin{tikzpicture}
	\begin{axis}[
		width=0.75\columnwidth,
		height=0.75\columnwidth,
		title={Reasoning time},
		xlabel={Time \textsc{Horn} [s]},
		ylabel={Time \textsc{Schema} [s]},
		ymode=log,
		xmode=log,
		xmin=0.1, xmax=1800,
		ymin=0.1, ymax=1800,
		xtick={0.1,1,10,100, 1000},
		ytick={0.1,1,10, 100, 1000},
		extra y ticks={1800},
		extra y tick style={
			grid=minor,
			yticklabel={timeout},
			yticklabel style={yshift=0.7ex, anchor=east}},
		extra x ticks={1800},
		extra x tick style={
			grid=minor,
			xticklabel={timeout},
			xticklabel style={xshift=0ex, anchor=north west}},
		legend pos=south east,
		ymajorgrids=true,
		xmajorgrids=true,
		grid style=dashed,
		]

		\addplot[
		color=MidnightBlue,
		opacity=0.4,
		mark=*,
		only marks,
		]
		table [
		x=reasoningHorn, 
		y=reasoningSchema, 
		col sep=comma
		] {planningTimes.csv};
		
		\addplot [draw=gray,
		dashed] coordinates {
			(0.1,0.1) (1800,1800)
		}; 
		
	\end{axis}
\end{tikzpicture}
	}
	\resizebox{0.32\textwidth}{!}{
		\begin{tikzpicture}
	\begin{axis}[
		width=0.75\columnwidth,
		height=0.75\columnwidth,
		title={Planning time},
		xlabel={Time \textsc{Horn} [s]},
		ylabel={Time \textsc{Schema} [s]},
		ymode=log,
		xmode=log,
		xmin=0.1, xmax=1800,
		ymin=0.1, ymax=1800,
		xtick={0.1,1,10,100, 1000},
		ytick={0.1,1,10, 100, 1000},
		extra y ticks={1800},
		extra y tick style={
			grid=minor,
			yticklabel={timeout},
			yticklabel style={yshift=0.7ex, anchor=east}},
		extra x ticks={1800},
		extra x tick style={
			grid=minor,
			xticklabel={timeout},
			xticklabel style={xshift=0ex, anchor=north west}},
		legend pos=south east,
		ymajorgrids=true,
		xmajorgrids=true,
		grid style=dashed,
		]

		\addplot[
		color=MidnightBlue,
		mark=*,
		only marks,
		opacity=0.4,
		]
		table [
		x=planningHorn, 
		y=planningSchema, 
		col sep=comma
		] {planningTimes.csv};
		
		\addplot [draw=gray,
		dashed] coordinates {
			(0.1,0.1) (1800,1800)
		}; 
		
	\end{axis}
\end{tikzpicture}
	}
	\resizebox{0.32\textwidth}{!}{
		\begin{tikzpicture}
	\begin{axis}[
		width=0.75\columnwidth,
		height=0.75\columnwidth,
		title={Total time},
		xlabel={Time \textsc{Horn} [s]},
		ylabel={Time \textsc{Schema} [s]},
		ymode=log,
		xmode=log,
		xmin=0.8, xmax=1800,
		ymin=0.8, ymax=1800,
		xtick={0.1,1,10,100, 1000},
		ytick={0.1,1,10, 100, 1000},
		extra y ticks={1800},
		extra y tick style={
			grid=minor,
			yticklabel={timeout},
			yticklabel style={yshift=0.7ex, anchor=east}},
		extra x ticks={1800},
		extra x tick style={
			grid=minor,
			xticklabel={timeout},
			xticklabel style={xshift=0ex, anchor=north west}},
		legend pos=south east,
		ymajorgrids=true,
		xmajorgrids=true,
		grid style=dashed,
		]

		\addplot[
		color=MidnightBlue,
		mark=*,
		only marks,
		opacity=0.4,
		]
		table [
		x=totalHorn, 
		y=totalSchema, 
		col sep=comma
		] {planningTimes.csv};
		
		\addplot [draw=gray,
		dashed] coordinates {
			(0.1,0.1) (1800,1800)
		}; 
		
	\end{axis}
\end{tikzpicture}
	}
	\caption{Comparison of run time of \textsc{Schema} and \textsc{Horn}. For the planning times, we only depicted the cases where both approaches where able to finish the reasoning step within the time limit.}
	\label{fig:timePlots}
\end{figure*}
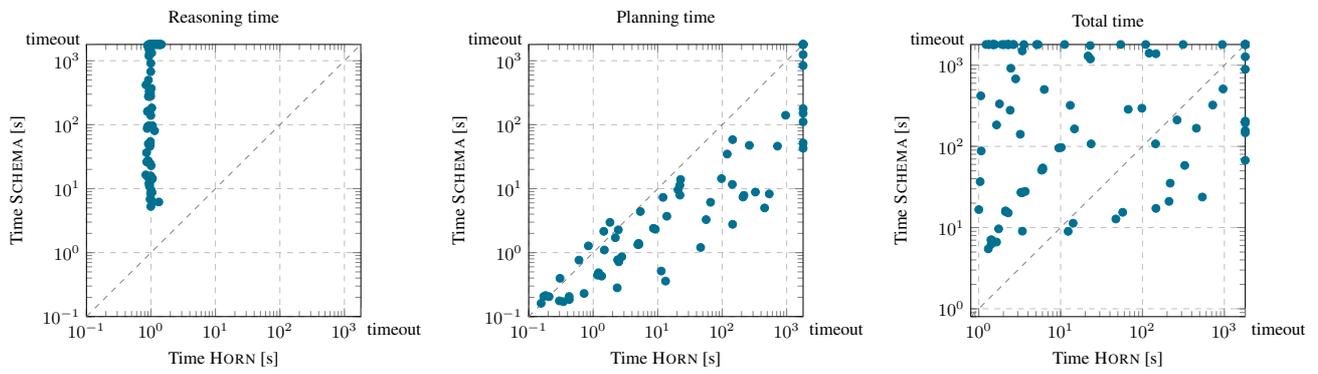

We compare the time needed to solve the planning problems for \textsc{Horn} and \textsc{Schema} in more detail. Figure~\ref{fig:timePlots} shows comparisons for the time used for reasoning, planning and the total time. 

The first plot shows how different the first computation step of the two algorithms is. Because \textsc{Horn} only translates the ontology axioms into derivation rules, the computation time is always short (around 1s). On the other hand, \textsc{Schema} generates all explanations for the queries for the different individuals. Hence, the reasoning needs way more time, especially when there are many individuals. However, this initial effort can pay off for instances where a large portion of the search space needs to be explored. 

The second plot shows, why this is the case: the algorithm \textsc{Schema} results in fewer time spend by the reasoner to find a plan. Depending on the example, the advantage can be by an order of magnitude or even higher. This is due to the structure of the generated derivation rules. While \textsc{Horn} generates rules that hierarchical depend on each other, the rules created by \textsc{Schema} creates rules that only add one layer on top of the planning atoms. Thus, evaluating a state is much faster for \textsc{Schema}. This pays especially off for the instances where many states need to be evaluated, i.e. the hardest planning problems. 

In total, this results in a mixed picture for the total time (depicted in the third plot). While \textsc{Horn} is faster for most instances, there are some instances for which \textsc{Schema} is faster. 
}

\end{document}